\documentclass{article} 
\usepackage{iclr2026_conference,times}


\usepackage{amsmath,amsfonts,bm}









\def\eqref#1{equation~\ref{#1}}









\def\1{\bm{1}}










\DeclareMathAlphabet{\mathsfit}{\encodingdefault}{\sfdefault}{m}{sl}
\SetMathAlphabet{\mathsfit}{bold}{\encodingdefault}{\sfdefault}{bx}{n}











\newcommand{\Var}{\mathrm{Var}}



\usepackage{url}
\usepackage{amsmath, amssymb}
\usepackage{amsthm}
\usepackage{wrapfig}
\usepackage{enumitem}
\usepackage{subcaption}
\usepackage{booktabs}
\usepackage{graphicx}
\usepackage{placeins}      
\usepackage{hyperref}
\usepackage{mathtools}

\newcommand{\EE}{\mathbb{E}}
\newtheorem{theorem}{Theorem}
\newtheorem{lemma}{Lemma}
\newtheorem*{remark}{Remark}
\newtheorem*{corollary}{Corollary}

\title{Arithmetic-Mean \texorpdfstring{$\mu$P}{muP} for Modern Architectures: A Unified Learning-Rate Scale for CNNs and ResNets}


\iclrfinalcopy
\author{%
Haosong Zhang\textsuperscript{1}\thanks{Equal contribution.},
Shenxi Wu\textsuperscript{1}\footnotemark[1],
Yichi Zhang\textsuperscript{2}\thanks{Corresponding authors: Yichi Zhang (\texttt{zhangyichi@stern.nyu.edu}), Xi Chen (\texttt{xc13@stern.nyu.edu}), and Wei Lin (\texttt{wlin@fudan.edu.cn}).},
Xi Chen\textsuperscript{2}\footnotemark[2],
Wei Lin\textsuperscript{1}\footnotemark[2]
\\[-1pt]
\textsuperscript{1}Fudan University, Shanghai, China\quad
\textsuperscript{2}New York University, New York, NY, USA
}

%

\begin{document}

\maketitle

\fancyhf{}                 
\lhead{} \chead{} \rhead{} 
\fancyfoot[C]{\thepage}    
\renewcommand{\headrulewidth}{0pt}
\renewcommand{\footrulewidth}{0pt}
\makeatother

\begin{abstract}
Choosing an appropriate learning rate remains a key challenge in scaling depth of modern deep networks. The classical maximal update parameterization ($\mu$P) enforces a fixed per-layer update magnitude, which is well suited to homogeneous multilayer perceptrons (MLPs) but becomes ill-posed in heterogeneous architectures where residual accumulation and convolutions introduce imbalance across layers. We introduce \emph{Arithmetic-Mean $\mu$P} (AM-$\mu$P), which constrains not each individual layer but the \emph{network-wide average} one-step pre-activation second moment to a constant scale. Combined with a residual-aware He fan-in initialization—scaling residual-branch weights by the number of blocks ($\mathrm{Var}[W]=c/(K\cdot \mathrm{fan\text{-}in})$)—AM-$\mu$P yields width-robust depth laws that transfer consistently across depths. We prove that, for one- and two-dimensional convolutional networks, the maximal-update learning rate satisfies $\eta^\star(L)\propto L^{-3/2}$; with zero padding, boundary effects are constant-level as $N\gg k$.  For standard residual networks with general conv+MLP blocks, we establish $\eta^\star(L)=\Theta(L^{-3/2})$, with $L$ the minimal depth. Empirical results across a range of depths confirm the $-3/2$ scaling law and enable zero-shot learning-rate transfer, providing a unified and practical LR principle for convolutional and deep residual networks without additional tuning overhead.
\end{abstract}

\section{Introduction}

Training deep networks is highly sensitive to the learning rate (LR). In homogeneous MLPs, ``maximal update'' ($\mu$P) principles yield width-robust LR settings that transfer across depth by keeping the one-step pre-activation variance at a constant scale. Modern architectures, however, are dominated by \emph{residual networks (ResNets)} and \emph{convolutional networks (CNNs)}, where residual accumulations render layer statistics inherently heterogeneous and convolutions introduce spatial--channel coupling and boundary effects (circular vs.\ zero padding). Enforcing identical per-layer update magnitudes (e.g., setting each layer’s update variance to $1$) is overly restrictive for such heterogeneous networks; a network-level budget is more appropriate.

\paragraph{A more general $\mu$P LR: network-wide scale (AM-$\mu$P).}
Denote $L$ by the \emph{minimal effective depth} (each residual block counts as one depth unit; intra-block sublayers only induce lower-order corrections). For input $x$ and layer $\ell$, write the one-step pre-activation update as $\Delta z^{(\ell)}_i(x)$ and define the per-layer second moment
\[
S_\ell \;:=\; \mathbb{E}_{x\sim\mathcal D}\!\big[(\Delta z^{(\ell)}_i(x))^2\big].
\]
Instead of forcing $S_\ell\!=\!1$ for all $\ell$, we \emph{fix the network-wide average} to a constant scale:
\[
\bar S \;:=\; \frac{1}{L}\sum_{\ell=1}^L S_\ell \;=\; 1,
\]
which upgrades $\mu$P from a ``per-layer equal-amplitude'' rule to a \emph{network-level budget} that remains width-robust while allowing layers to reallocate update magnitudes under residual accumulation, convolutions, and boundary effects. In homogeneous cases it reduces to the classical $\mu$P criterion.

\paragraph{Residual-aware initialization.}
We pair the above LR scale with a residual-aware He initialization: for a model with $K$ residual blocks, we scale residual-branch weights with variance
\[
\mathrm{Var}[W] \;=\; c\big/\!\big(K\cdot \mathrm{fan\_in}\big),
\]
which keeps forward/backward second moments controlled across depth.

\paragraph{Main results.}
Within this unified initialization and LR scale, we analyze \textbf{1D/2D CNNs} (handling both circular and zero padding) and \textbf{standard ResNets} (identity-type skips as the default; a few projection/downsampling shortcuts contribute only constant/boundary-level corrections). Residual blocks are allowed to contain general \emph{conv+MLP} substructures (not merely a single MLP layer).

\emph{CNNs and MLPs.} For 1D/2D CNNs,
\[
\eta^\star(L)\ \propto\ L^{-3/2}.
\]
With \emph{circular padding}, visits are uniform and the recursion mirrors the fully connected case. With \emph{zero padding}, boundary non-uniformity introduces corrections proportional to boundary ratios; when the spatial width $N$ is \emph{much larger} than the kernel's effective coverage $k$ (i.e., $N\gg k$), these corrections become constant-level and do not change the leading $L^{-3/2}$ law.

\emph{ResNets (general residual blocks).} For standard ResNets,
\[
\eta^\star(L)\ =\ \Theta\!\big(L^{-3/2}\big).
\]
Compared to the proportional form for CNN/MLP, residual accumulation and layer-wise heterogeneity make the constant characterization more conservative (hence $\Theta(\cdot)$), while preserving the same order in $L$.

\paragraph{Implications.}
Under AM-$\mu$P and a residual-aware initialization, \textbf{CNNs align with MLPs} to the proportional $L^{-3/2}$ depth law, whereas \textbf{ResNets match the order} but with a more conservative constant characterization. For zero padding, the engineering condition $N\gg k$ gives a verifiable regime where boundary effects are constant-level.

\paragraph{Empirical validation.}
On homogeneous CNN/ResNet families (ReLU/GELU, He fan-in, SGD without momentum, fixed batch size), we sweep LR on a logarithmic grid across depths $L$ and record the maximal-update LR $\eta^\star$. We observe: (i) a stable log--log slope near $-3/2$; (ii) zero-shot LR transfer across depths; (iii) activation changes affect constants but not the depth exponent; (iv) padding and width mainly affect constant factors. Full curves, ablations, and \textbf{additional results on CIFAR-100 and ImageNet} appear in the appendix.

\paragraph{Contributions.}
\begin{itemize}
\item \textbf{A more general network-level $\mu$P LR scale.} We propose AM-$\mu$P, a network-level update-budget criterion that is equivalent to classical $\mu$P in homogeneous settings and remains valid under residuals/convolutions/boundaries.
\item \textbf{Unified depth--LR laws.} With the above scale and initialization we prove \(\eta^\star(L)\propto L^{-3/2}\) for CNNs/MLPs and \(\eta^\star(L)=\Theta(L^{-3/2})\) for ResNets; we systematically treat circular vs.\ zero padding and the sufficiency of \(N\gg k\).
\item \textbf{General residual blocks.} Residual blocks may contain conv+MLP sublayers (not just a single MLP), yet the depth laws and cross-depth transfer persist.
\item \textbf{Practice-oriented guidance.} Experiments across depths and activations corroborate the $-3/2$ slope and zero-shot transfer, providing direct LR-setting guidance for large-scale training.
\end{itemize}

\paragraph{Organization.}
Subsection~\ref{subsec:setup} and subsection~\ref{sec:assumptions} formalize the model and the AM-$\mu$P scale. Subsection~\ref{subsec:cnn} presents the CNN (1D/2D; circular/zero) results and boundary/finite-width corrections. Subsection~\ref{subsec:resnet} establishes the ResNet \(\Theta(L^{-3/2})\) law under general residual blocks. Section~\ref{sec:exps} reports experiments and ablations. The appendix contains full proofs and additional experiments, including CIFAR-100 and ImageNet, among others.

\section{Related Work}
\label{sec:related_work}

\subsection{Neural Network Initialization and Update Scale Challenges}

Stable training of deep neural networks critically depends on the interplay between weight initialization and the scale of parameter updates. Classical schemes such as Xavier initialization \citep{Glorot2010Xavier} and He initialization \citep{He2015Delving} aim to preserve the variance of activations and backpropagated gradients across layers, thereby mitigating vanishing or exploding signals. These methods are particularly effective for specific architectures—for example, He initialization in ReLU networks and its extensions to convolutional layers and residual structures \citep{Taki2017ResInit}—but they primarily address stability at the initialization stage.

However, initialization alone cannot ensure consistent update magnitudes across layers during training, especially in modern architectures with residual connections, convolutions, or multiple pathways. Factors such as the number of signal paths, kernel sizes, and channel dimensions can cause substantial variation in update scales between layers, leading to imbalances between shallow and deep layers. Such imbalances may slow convergence or destabilize training, highlighting the need for a theoretical framework that explicitly controls update scales across the entire network. The next subsection introduces one representative approach—$\mu$P \citep{YangTPV2022}.

\subsection{Original $\mu$P Regime for MLPs}

$\mu$P was first proposed by Yang et al.\ in Tensor Programs V~\citet{YangTPV2022} as a principled way to enable hyperparameter transfer across widths in MLPs. Its core idea is to select parameter initialization and global learning rate such that, for all hidden layers (except input and output), the per-layer pre-activation update variance remains $\mathcal{O}(1)$:
\[
\mathbb{E}_{x \sim \mathcal{D}} \left[(\Delta z_i^{(\ell)}(x))^2 \right] = 1, \quad \forall\, \ell.
\]
This ensures that training dynamics are stable under width scaling, allowing hyperparameters tuned on small models to generalize to larger ones. The open-source \texttt{mup} library~\citep{MicrosoftMup2022} provides a PyTorch interface for applying $\mu$P in practice.

Jelassi et al.~\citep{Jelassi2023} further investigated the depth dependence of $\mu$P learning rates in ReLU MLPs. Under mean-field initialization assumptions, they proved that while the critical learning rate $\eta^\star(L)$ is independent of width $n$, it scales with depth $L$ as
\[
\eta^\star(L) \propto L^{-3/2},
\]
revealing a nontrivial interaction between depth and stable update magnitudes. This result emphasizes the importance of depth-aware learning rate adjustment even under $\mu$P scaling.

Subsequent works have generalized the $\mu$P framework beyond plain MLPs. For instance, Chen et al.~\citet{Chen2024} proposed architecture-aware scaling methods compatible with residual and hybrid networks, and Chizat et al.~\citet{Chizat2024} introduced the “Feature Speed Formula,” offering a flexible theory for scaling hyperparameters in deep networks while recovering key $\mu$P properties.

In summary, the original $\mu$P regime provided a solid theoretical foundation for width scaling in MLPs. Later developments, particularly the discovery of depth dependence, laid the groundwork for adapting $\mu$P principles to more complex and realistic architectures.

\subsection{Initialization for MLP, CNNs and Residual Networks}

Weight initialization plays a critical role in enabling deep ReLU‑activated networks to train effectively. Here we summarize three key approaches:

\paragraph{He Initialization in MLPs}  
He et al.\ \citet{He2015Delving} proposed initializing weights in fully‑connected ReLU networks by sampling
\[
W_{ij}\sim\mathcal{N}\bigl(0,\tfrac{2}{n_{\mathrm{in}}}\bigr),
\]
where \(n_{\mathrm{in}}\) is the input (fan‑in) dimension. This simple strategy preserves the variance of activations and gradients across layers, significantly improving trainability in very deep networks.

\paragraph{Scaled initialization for 1D/2D CNNs.}
For a convolution with kernel support $\mathcal{K}\subset\mathbb{Z}^d$ ($d\in\{1,2\}$) of size $|\mathcal{K}|=k$ and $C_{\mathrm{in}}$ input channels, the effective fan-in is $n_{\mathrm{in}}=k\,C_{\mathrm{in}}$; adopting the rectifier-friendly He initialization \citep{He2015Delving} together with a mean-field “gating” factor $q:=\mathbb{E}[\sigma'(z)^2]$ (see e.g., \citep{Schoenholz2017DeepInfo,Xiao2018CNNMeanField}) gives
\[
W_{\text{conv}} \sim \mathcal{N}\!\Bigl(0,\; \frac{1}{q\,k\,C_{\mathrm{in}}}\Bigr).
\]
This single expression specializes to the usual 1D ($k$ is the kernel length) and 2D ($k=k_h k_w$) cases and preserves stable signal propagation in deep convolutional nets (cf.\ \citep{Glorot2010Xavier} for earlier schemes). For residual architectures, scaling with depth further improves stability \citep{Taki2017ResInit,Zhang2019Fixup,DeSmith2020SkipInit,Bachlechner2021ReZero}.

\paragraph{Scaled Initialization for ResNets}  
\citet{Taki2017ResInit} analyzed simplified ResNet models and showed that their robustness to initialization hinges on appropriately scaling weight variance relative to the number of residual blocks. Specifically, initializing with
\[
\mathrm{Var}(W_{\text{res}})=\frac{c}{K\,n},
\]
where \(K\) is the number of residual blocks, \(n\) is the layer fan‑in, and \(c=O(1)\), helps preserve signal and gradient stability even in very deep residual architectures \citep{Taki2017ResInit}.

\section{Methods}
\label{methods}

\subsection{Preliminaries: $\mu$P Regime Extension}
\label{subsec:setup}
To enable hyperparameter transferability across model widths, the $\mu$P (maximal-update parameterization) regime fixes a global learning rate so that a layerwise pre-activation update has $\mathcal O(1)$ magnitude under width scaling. In its original MLP form, one enforces at a reference layer $\ell$:
\[
\mathbb{E}_{x \sim \mathcal{D}}\!\left[(\Delta z_i^{(\ell)}(x))^2\right]=1.
\]
Modern architectures (skip/residual, convolutional branches) induce heterogeneous per-layer update scales, so single-layer control becomes inadequate. We therefore extend $\mu$P to a network-wide constraint.

\paragraph{AM-$\mu$P Regime}
Let $L$ denote the minimal effective depth. Define the layerwise update variance
\[
S_\ell \;\equiv\; \mathbb{E}_{x \sim \mathcal{D}}\!\left[(\Delta z_i^{(\ell)}(x))^2\right], 
\qquad 
\bar S \;\equiv\; \frac{1}{L}\sum_{\ell=1}^L S_\ell.
\]
We say the network is in the \emph{AM-$\mu$P regime} if
\[
\bar S \;=\; 1.
\]

\paragraph{Rationale for the arithmetic mean (formal rationale in Appx.~\ref{sec:am-mup-rationale})}

(i) \emph{Reduction to original $\mu$P.} In homogeneous layers ($S_\ell\approx S$), $\bar S=1$ implies $S_\ell\approx 1$ for all $\ell$. 
(ii) \emph{Global scale control.} By AM bounds, $\min_\ell S_\ell \le \bar S \le \max_\ell S_\ell$, so the overall update scale is $\mathcal O(1)$ despite heterogeneity.
(iii) \emph{Network-wide consistency.} The constraint lifts the maximal-update principle from a single layer to the whole network, aligning with residual/skip compositionality.

Unless otherwise stated, all subsequent results and experiments are based on this extended definition; formal uniqueness/robustness justifications (A1–A7) and comparisons to geometric/harmonic means are deferred to Appendix~\ref{sec:am-mup-rationale}.

\subsection{Structural Assumptions for CNNs and Residual Blocks}
\label{sec:assumptions}

We consider two families of architectures: (i) plain CNNs composed of homogeneous
convolutional blocks (HCBs; detailed next), and (ii) pre-activation residual networks
with identity skip connections (see \emph{Residual Blocks} below).
For any layer $\ell$, let $C_\ell$ denote the number of output channels, $\Lambda_\ell$
the spatial index set, $N_\ell := |\Lambda_\ell|$ the spatial length, and
$k_\ell := |\mathcal{K}_\ell|$ the kernel size. Convolutions use stride $s_\ell \equiv 1$;
unless otherwise stated, we adopt circular padding so that feature maps are spatially
stationary and $N_\ell = N_{\ell-1}$ within a block. We allow $\{C_\ell, k_\ell, N_\ell\}$
to vary with $\ell$.

To unify notation, we define the \emph{effective width} of a convolutional layer as
$M_\ell := C_\ell N_\ell$ (the total number of channel–position units). For fully-connected
layers, the width is the number of neurons $n_\ell$. When we refer to ``width-invariant'' scaling, ``width'' means $M_\ell$ for CNNs and $n_\ell$ for fully connected layers.
(Departures from exact homogeneity—e.g., zero padding or mild channel heteroscedasticity—
will be treated as small corrections quantified later by
$O(\max_\ell k_\ell/N_\ell)+O(\max_\ell 1/C_\ell)$.)

\paragraph{CNNs.}
Let spatial dimension $d\in\{1,2\}$ with index set $\Lambda_\ell\subset\mathbb{Z}^d$ and size $N_\ell:=|\Lambda_\ell|$ (in 2D, $N_\ell=H_\ell W_\ell$). Each convolutional layer $\ell$ has a kernel offset set $\mathcal{K}_\ell\subset\mathbb{Z}^d$ (arbitrary shape) with cardinality $k_\ell:=|\mathcal{K}_\ell|$. With circular padding and stride $1$, joint ranges of $p\in\Lambda_\ell$ and $\Delta\in\mathcal K_\ell$ visit each previous-layer site \emph{exactly $k_\ell$ times} (torus indexing). Activation is ReLU, $\sigma(u)=\max(0,u)$, which satisfies $\sigma'(u)^2=\sigma'(u)$.
\emph{Here $k_\ell$ denotes the kernel \textbf{cardinality}, whereas we will use $s_{\ell,r}:=\max_{\Delta\in\mathcal{K}_\ell}|\Delta_r|$ for the axial half-span along axis $r$.}

Weights across different layers and indices are independent with zero mean. For a convolutional layer with $C_{\ell-1}$ input channels we use He fan-in initialization written via kernel cardinality:
\[
\mathrm{Var}\!\big(W^{(\ell)}_{j,i,\Delta}\big)\;=\;\frac{2}{C_{\ell-1}\,k_\ell}\,,\qquad \Delta\in\mathcal{K}_\ell.
\]
Equivalently, with $n_{\mathrm{in}}=C_{\ell-1}k_\ell$, the general form $1/(q\,n_{\mathrm{in}})$ (for a generic activation with $q=\mathbb{E}[\sigma'(z)^2]$) reduces to $2/n_{\mathrm{in}}$ for ReLU since $q=\tfrac12$.
Fully-connected layers (including those following the CNN) use
\[
\mathrm{Var}\!\big(W^{(\ell)}_{j,i}\big)\;=\;\frac{2}{n_{\ell-1}}\,,
\]
and all biases are zero (or are independent with zero mean).

We assume a mild scale separation so that higher-order spatial covariance terms are negligible: along each spatial axis, the feature-map side length dominates the kernel extent. Using the axial spans $s_{\ell,r}$, we require $\min_r N_{\ell,r}\gg \max_r s_{\ell,r}$ (equivalently, $N_\ell^{1/d}\gg \mathrm{diam}(\mathcal{K}_\ell)$ in typical compact-kernel regimes), and channel widths are not pathologically small. In practice, $C_\ell\in[64,512]$ with small kernels (e.g., $3$–$7$ along each axis) and $H_\ell,W_\ell$ in the tens to hundreds usually satisfy this condition.\footnote{As in common CNN backbones (VGG/ResNet-style), channels are in the hundreds while kernels are small; extremely narrow layers or unusually large kernels may violate this assumption.}

\emph{Remark (specializations and boundary effects).}
(1) In 1D, $N_\ell$ is the sequence length and $k_\ell=|\mathcal{K}_\ell|$ is the kernel width; the above reduces to the standard $2/(C_{\ell-1}k_\ell)$ rule.  
(2) In 2D with a rectangular stencil, $k_\ell=k_{\ell,h}k_{\ell,w}$ and the variance becomes $2/(C_{\ell-1}k_{\ell,h}k_{\ell,w})$.  
(3) Replacing circular padding by zero padding breaks uniform coverage only near the boundary; the layerwise identities acquire $O\!\big(\sum_{r=1}^d s_{\ell,r}/N_{\ell-1,r}\big)$ corrections, which vanish as feature maps grow and do not affect leading-order scaling.

\paragraph{Residual Blocks.}
We consider pre-activation residual blocks with identity skip connections. 
Let $z_{\ell-1}$ denote the input to the $\ell$-th block and $z_\ell$ its output:
\[
z_\ell = z_{\ell-1} + \mathcal{F}_\ell(z_{\ell-1}),
\]
where the residual branch $\mathcal{F}_\ell$ is a composition of $m_\ell \ge 1$ layers of the form ``linear map $\rightarrow$ ReLU'', i.e.,
\[
\mathcal{F}_\ell
= T^{(m_\ell)}_\ell \circ \sigma \circ T^{(m_\ell-1)}_\ell \circ \cdots \circ \sigma \circ T^{(1)}_\ell,
\quad \sigma(u) = \max(0, u).
\]
Each linear map $T^{(t)}_\ell$ can be either a fully-connected layer or a 1D convolution (with constant kernel size, stride $1$, and ``same'' padding to preserve the spatial length). Group or dilated convolutions are allowed, but the number of groups and dilation rate are $O(1)$.

Residual blocks are assumed to have matching input and output shapes so that the skip connection is an identity map, i.e., $n_\ell = n_{\ell-1}$ and $C_\ell = C_{\ell-1}$. Dimension changes (e.g., downsampling or channel projection) are allowed only in a vanishing fraction of blocks, whose total number is $O(1)$ relative to the total block count.

We define the \emph{minimal depth} $L$ of the network as the number of residual blocks, counting each block as one unit regardless of its internal depth $m_\ell$.see Fig.~\ref{fig:depth-convention}.
\begin{figure}[t]
  \centering
  \includegraphics[width=0.92\columnwidth]{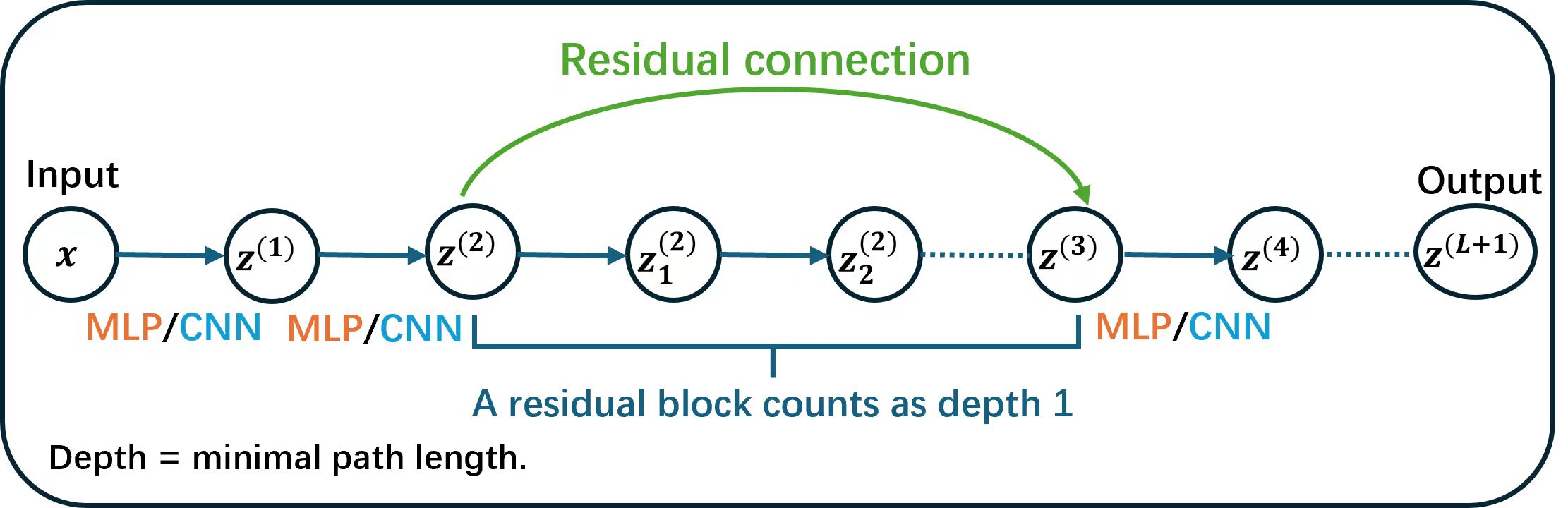}
  \caption{\textbf{Depth convention (minimal-path).}
  Depth equals the minimal path length; each residual block counts as 1.}
  \label{fig:depth-convention}
\end{figure}

The internal depth is required to be sublinear in $L$, namely
\[
\sup_\ell \frac{m_\ell}{L} \to 0,
\]
so that per-block computations do not asymptotically dominate the scaling with respect to $L$.

These assumptions encompass standard ResNet architectures\footnote{This condition is consistent with standard ResNet designs: each residual block typically contains one or two ReLU--convolution operations. If $m_\ell$ is too large, the skip connection may lose its identity-like effect, reducing the inherent advantages of the residual structure. With the exception of special networks containing many skip connections (e.g., U-Net), most residual networks satisfy this sparsity assumption.}
: for example, the basic block corresponds to $m_\ell = 2$, while the bottleneck block corresponds to $m_\ell = 3$ (in a 1D analogue such as $1\times1$--$3\times1$--$1\times1$ convolutions). Dimension changes occur only in a small number of projection blocks, which can be accommodated within this framework.

\subsection{Extension to Convolutional Networks}
\label{subsec:cnn}

We extend the depth--learning-rate scaling to 1D/2D convolutional networks built from homogeneous convolutional blocks (HCBs; see Sec.~\ref{sec:assumptions}). The results mirror the MLP case and show that convolution does not change the depth exponent, while also quantifying finite-width and boundary corrections that arise in realistic CNNs.

\begin{theorem}[Width-invariant depth scaling for homogeneous conv blocks in 1D/2D]
\label{thm:cnn-scaling}
Let the spatial dimension be $d\in\{1,2\}$ and consider a homogeneous convolutional block with stride $=1$, circular padding, ReLU activation, and He fan-in initialization. For arbitrary channel widths $\{C_\ell\}$, arbitrary kernel supports $\mathcal{K}_\ell\subset\mathbb{Z}^d$ (of any size/shape), and spatial resolutions $\Lambda_\ell$ (so $|\Lambda_\ell|=N_\ell$ in 1D or $H_\ell W_\ell$ in 2D), the learning-rate scale that preserves width-invariant training dynamics satisfies
\[
\eta^\star(L)=\kappa\,L^{-3/2},
\]
where $\kappa$ depends only on the activation/initialization fixed point and is independent of $\{C_\ell,\mathcal{K}_\ell,\Lambda_\ell\}$.
\end{theorem}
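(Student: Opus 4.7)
The plan is to reduce the 1D/2D convolutional setting to the homogeneous MLP case analyzed in the $\mu$P-depth literature, by exploiting circular-padding translation invariance together with the He fan-in variance $2/(C_{\ell-1}k_\ell)$, so that every layerwise second moment $S_\ell$ satisfies a recursion identical in form to the MLP one, and then to invoke the AM-$\mu$P depth law on the resulting scalar recursion.

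First I would expand the one-step pre-activation update $\Delta z^{(\ell)}_i(p)$ to leading order in $\eta$ as a sum of contributions from weight updates $\Delta W^{(\ell')}$ across layers $\ell'\le\ell$, written as a product of a forward feature chain and a backward Jacobian chain. Taking second moments, the inner sum over the kernel offset $\Delta\in\mathcal K_\ell$ and the input channel $j\in[C_{\ell-1}]$ produces a multiplicative factor $C_{\ell-1}k_\ell$ that is exactly cancelled by the He variance. Stride $1$ plus circular padding makes receptive-field coverage uniform over the torus (each previous-layer site is visited exactly $k_\ell$ times by the joint sweep over $p$ and $\Delta$), so all relevant quantities are spatially stationary; cross-channel independence of $W^{(\ell)}_{j,i,\Delta}$ and the ReLU fixed-point identity $\sigma'(u)^2=\sigma'(u)$ with $q=1/2$ then collapse the tensorial recursion to the same scalar form that governs the MLP case.

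Next I would close the argument by applying AM-$\mu$P. Under $\bar S=L^{-1}\sum_{\ell}S_\ell=1$, the reduced scalar recursion is exactly the one solved in the ReLU MLP regime, yielding $\eta^\star(L)=\kappa L^{-3/2}$ with $\kappa$ determined solely by the activation/initialization fixed point. Because the reduction cancels $C_{\ell-1}$, $k_\ell$, and $\Lambda_\ell$ identically at every layer, $\kappa$ is manifestly independent of $\{C_\ell,\mathcal K_\ell,\Lambda_\ell\}$, which is the width-invariance claim. The 1D and 2D cases are unified by treating $k_\ell=|\mathcal K_\ell|$ as the kernel cardinality, since the reduction uses only the cardinality and the torus-uniformity, not the stencil shape.

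The main obstacle will be controlling spatial cross-covariances. Unlike in MLPs, updates at nearby positions $p,p'$ share overlapping receptive fields, so $\mathrm{Cov}(\Delta z^{(\ell)}_i(p),\Delta z^{(\ell)}_i(p'))$ is nonzero whenever $\|p-p'\|_\infty\le 2\max_r s_{\ell,r}$. The key lemma needed is that, after summing the within-layer contributions, these overlap terms enter only through $O\big(\sum_r s_{\ell,r}/N_{\ell-1,r}\big)$ corrections that are suppressed under the scale-separation assumption and, crucially, cancel exactly in the torus average induced by circular padding. This exact cancellation is what upgrades the bound to the proportional form $\kappa L^{-3/2}$ stated in the theorem (as opposed to the $\Theta$-form needed for the residual setting); the corresponding zero-padding regime is then bookkept separately as an $O(\max_\ell k_\ell/N_\ell)$ boundary perturbation that preserves the leading exponent.
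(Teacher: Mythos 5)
Your proposal is correct and follows essentially the same route as the paper: expand $\Delta z^{(\ell)}$ over per-layer weight updates, use the exact cancellation of the $C_{\ell-1}k_\ell$ coverage factor against the He fan-in variance under circular padding and stride $1$ to establish a layerwise invariance of the averaged derivative products (the paper's Lemma~\ref{lem:expectation}), discard the $O((C_{L+1}N_{L+1})^{-1})$ output-layer term, and then invoke the MLP overlap-counting argument ($\sum_{h_1,h_2\le\ell}\min\{h_1,h_2\}=\Theta(\ell^3)$) under the AM-$\mu$P normalization $\bar S=1$ to get $\eta^\star=\kappa L^{-3/2}$. Your identification of spatial cross-covariances as the main technical obstacle, handled via torus stationarity and bookkept as $O(s_\ell/N_\ell)$ corrections under zero padding, matches the paper's treatment as well.
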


\noindent\emph{Implication.} The exponent matches the MLP setting; thus convolutional structure does not alter the asymptotic depth dependence. A learning rate tuned at one width transfers to any other width without retuning. The proof is deferred to Appendix~\ref{app:cnn-proof}.

\begin{theorem}[Finite-width, boundary, and mini-batch corrections in 1D/2D]
\label{thm:cnn-corrections}
Under the setup of Theorem~\ref{thm:cnn-scaling}, but allowing mild departures from homogeneity
(e.g., zero padding or mild channel heteroscedasticity) and mini-batch size $B$,
the width-invariant depth scaling persists at leading order and admits a uniform correction:
\[
\eta^\star\!\big(L;\{C_\ell,\Lambda_\ell,\mathcal{K}_\ell,B\}\big)
=\kappa\,L^{-3/2}\!\left(1+
O\!\left(\underbrace{\max_\ell \tfrac{1}{C_{\ell-1}}}_{\text{width}}
+\underbrace{\max_\ell \mathrm{bdry}(\Lambda_\ell,\mathcal{K}_\ell)}_{\text{boundary}}
+\underbrace{\tfrac{1}{B}}_{\text{batch}}\right)\right),
\]
where the boundary fraction $\mathrm{bdry}(\Lambda_\ell,\mathcal{K}_\ell)$ quantifies the nonuniform coverage near
the boundary induced by zero padding. Concretely:
\[
\mathrm{bdry}(\Lambda_\ell,\mathcal{K}_\ell)=
\begin{cases}
\displaystyle \frac{s_\ell}{N_\ell}, & \text{(1D), with } s_\ell:=\max_{\Delta\in \mathcal{K}_\ell}|\Delta|,\\[6pt]
\displaystyle \frac{s_{\ell,h}}{H_\ell}+\frac{s_{\ell,w}}{W_\ell}, & \text{(2D), with }
s_{\ell,h}:=\max_{\Delta\in \mathcal{K}_\ell}|\Delta_h|,\ 
s_{\ell,w}:=\max_{\Delta\in \mathcal{K}_\ell}|\Delta_w|.
\end{cases}
\]
In particular, these subleading terms do not alter the depth exponent $-3/2$.
\end{theorem}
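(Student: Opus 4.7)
The plan is to treat Theorem~\ref{thm:cnn-corrections} as a first-order perturbation of Theorem~\ref{thm:cnn-scaling}. Specifically, I would redo the per-layer second-moment recursion underlying Theorem~\ref{thm:cnn-scaling}, now tracking three sources of deviation from the idealized setting: (i) non-uniform visit counts induced by zero padding, (ii) finite channel widths $C_\ell$, and (iii) finite batch size $B$. The goal is to show that each layerwise update variance admits a decomposition $S_\ell = S_\ell^{\mathrm{hom}}(1+\epsilon_\ell)$ whose error $\epsilon_\ell$ is controlled uniformly by the three quantities above, and then to propagate this perturbation to the AM-$\mu$P constraint $\bar S=1$ to read off the induced correction on $\eta^\star$.

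For the boundary term I would first re-express the one-step pre-activation second moment as a sum over spatial positions $p\in\Lambda_\ell$ weighted by the local visit count $\nu_\ell(p)$, i.e.\ the number of kernel offsets that land on $p$ when sliding across $\Lambda_\ell$. Under circular padding $\nu_\ell(p)\equiv k_\ell$; under zero padding only positions within the axial half-spans $s_{\ell,r}$ of the boundary have $\nu_\ell(p)<k_\ell$. A direct counting argument bounds the fraction of such positions by $\mathrm{bdry}(\Lambda_\ell,\mathcal{K}_\ell)$ in the stated 1D/2D form, so the position-averaged visit count deviates from $k_\ell$ by a relative error of $O(\mathrm{bdry}(\Lambda_\ell,\mathcal{K}_\ell))$. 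Substituting into the one-step identity from Theorem~\ref{thm:cnn-scaling} yields the claimed boundary contribution to $\epsilon_\ell$.

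The width and batch contributions follow from standard concentration. With He fan-in initialization and weight independence, the channel-averaged pre-activation second moment concentrates around its expectation with relative fluctuation $O(1/C_{\ell-1})$, obtained by a direct second-moment computation that uses the bounded fourth moment of ReLU outputs at the critical fixed point. The empirical batch estimator of $S_\ell$ has standard deviation $O(1/\sqrt{B})$; since the AM-$\mu$P criterion $\bar S=1$ is a level-set condition on a squared quantity, its influence on the maximal-update rate enters quadratically, producing an $O(1/B)$ relative correction. These per-layer errors are absorbed into $\epsilon_\ell$ without any cross-layer coupling.

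The main obstacle is ensuring the per-layer errors do not compound along depth. Naively, $L$ layers each inflated by $(1+\epsilon)$ would multiply to $(1+\epsilon)^L=1+O(L\epsilon)$, destroying the claim. Two observations defeat this. First, the AM-$\mu$P budget $\bar S$ is an \emph{average} over $\ell$, so after summing $S_\ell=S_\ell^{\mathrm{hom}}(1+\epsilon_\ell)$ and dividing by $L$, it inherits only $\max_\ell|\epsilon_\ell|$ rather than a product. Second, at the critical fixed point the underlying second-moment recursion is only marginally stable, so small perturbations accumulate at most linearly in $\ell$, and a Gr\"onwall-type bound on the partial sums of $\epsilon_\ell$ keeps them sub-leading uniformly in $L$. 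Combining the two, solving $\bar S=1$ for $\eta$ under the perturbed recursion gives $\eta^\star=\kappa L^{-3/2}\!\left(1+O(\max_\ell 1/C_{\ell-1}+\max_\ell \mathrm{bdry}(\Lambda_\ell,\mathcal{K}_\ell)+1/B)\right)$, preserving the $-3/2$ depth exponent as asserted.
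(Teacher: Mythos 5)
Your route is essentially the paper's (Appendix~\ref{app:cnn-proof}): the boundary term enters exactly as you describe, through the visit-count deficit in the layerwise invariance identity of Lemma~\ref{lem:expectation} (the paper's remark records $\mathbb{E}[T_h\mid z^{(h-1)}]=T_{h-1}+O(s_h/N_{h-1})$, and the 2D remark gives the $s_{h,h}/H_h+s_{h,w}/W_h$ form); the width term comes from the breakdown of exact unit-wise equality via diagonal dominance, which the paper likewise quantifies as $O(1/C_{\ell-1})$; and the batch term is merely asserted in the paper, so your minibatch-gradient-variance argument supplies more than the reference proof does. The one substantive point of divergence is your explicit treatment of error accumulation across depth, which the paper does not address at all --- and there your argument is the weak link. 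The invariance identity is iterated $L-\ell$ times inside the top-layer reduction of each individual $S_\ell$, so the averaging over $\ell$ in $\bar S$ does not help: each summand already carries the iterated error. A Gr\"onwall bound on a per-layer relative perturbation $\epsilon$ yields $e^{O(L\epsilon)}=1+O(L\epsilon)$, not $1+O(\epsilon)$, so the step as written does not deliver the $L$-independent correction the theorem asserts; one needs either sign cancellation in the boundary deficits or an additional scale-separation hypothesis of the form $L\cdot\max_\ell \mathrm{bdry}(\Lambda_\ell,\mathcal{K}_\ell)=o(1)$. The paper's own proof silently makes the same leap, so you have not missed anything it provides, but your Gr\"onwall sentence should not be presented as closing that gap.
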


\noindent\emph{Proof.} Deferred to Appendix~\ref{app:cnn-proof}.

\subsection{Extension to ResNet Architectures}
\label{subsec:resnet}

We now extend the depth--learning-rate scaling rule to ResNet architectures composed of standard residual blocks (see Sec.~\ref{sec:assumptions}). Here, the network depth $L$ is measured as the \emph{minimal depth}, meaning that each residual block counts as one depth unit regardless of the number of layers within it. Under the standing assumptions and adopting the AM-$\mu$P normalization across layers, we obtain the following result.

\begin{theorem}[$\mu$P scaling law for ResNets]
\label{thm:mup-resnet-scaling}
For a ResNet of minimal depth $L$ initialized with scaled He initialization 
\(\mathrm{Var}[w] = c/(Kn)\) for $K$ residual blocks of width $n$, the learning-rate scale that preserves width-invariant training dynamics satisfies
\[
\eta^\star(L) = \Theta\!\big(L^{-3/2}\big).
\]
\end{theorem}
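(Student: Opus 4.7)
The plan is to adapt the MLP analysis of \citet{Jelassi2023} to the residual setting by combining the AM-$\mu$P normalization with the residual-aware He initialization $\mathrm{Var}[w]=c/(Kn)$. The reduction proceeds in three steps: (i) establish forward/backward second-moment balance across blocks; (ii) compute the per-layer one-step pre-activation update variance $S_\ell$; (iii) average over $\ell$, impose $\bar S=1$, and solve for $\eta^\star$.

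First, under the scaled init with identity skips, I would show that the forward pre-activation second moments $q_\ell:=\mathbb{E}[(z_i^{(\ell)})^2]$ and the corresponding backward-Jacobian second moments both remain $\Theta(1)$ uniformly in $\ell$. Each residual branch $\mathcal{F}_\ell$ contributes $O(1/K)=O(1/L)$ to $q_\ell$ because its weights carry variance $c/(Kn)$, so the block recursion $q_\ell=q_{\ell-1}+O(1/L)$ accumulates to a bounded total. The assumption $\sup_\ell m_\ell/L\to 0$ ensures intra-block compositions contribute only $o(1)$ corrections, and the $O(1)$ projection/downsampling blocks yield constant-level perturbations that do not affect leading-order behavior.

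Next I would decompose the one-step perturbation as $\Delta z^{(\ell)}=\eta\sum_{\ell'=1}^{L}\Phi_{\ell,\ell'}$, where $\Phi_{\ell,\ell'}$ encodes the first-order effect of updating block $\ell'$'s weights on $z^{(\ell)}$. Each $\Phi_{\ell,\ell'}$ factorizes into a forward map through preceding blocks, a local weight-gradient outer product, and a forward/backward propagation routed through the residual composition. Using the Step-1 moments and ReLU gating, I would show $S_\ell=\Theta(\eta^{2}\ell^{3})$ by the same mechanism that produces the $L^{3}$ amplification in MLPs: the cross-terms in $(\sum_{\ell'}\Phi_{\ell,\ell'})^{2}$ share common backward Jacobians, and summing over ordered pairs of layer indices yields a cubic factor. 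Averaging over $\ell$ then gives $\bar S=\Theta(\eta^{2}L^{3})$, and the AM-$\mu$P constraint $\bar S=1$ delivers $\eta^\star(L)=\Theta(L^{-3/2})$, with matching upper and lower bounds both following from the fact that a positive fraction of layers achieve $S_\ell=\Theta(\eta^{2}L^{3})$.

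The main obstacle will be justifying the cubic scaling in the residual setting: identity skips create exponentially many gradient paths, and one must show that the scaling $c/(Kn)$ suppresses each nonskip branch by a factor $\Theta(1/\sqrt{L})$, so that only a polynomially-bounded family of short paths contributes at leading order and their cross-correlations reproduce the MLP behavior. A secondary difficulty is controlling block heterogeneity (general conv+MLP substructures, varying $m_\ell$, and rare projection blocks) without upsetting the leading-order scaling; this is precisely where the AM-$\mu$P averaging is essential, and why the result is stated as $\Theta(L^{-3/2})$ rather than the sharper proportional law $\kappa L^{-3/2}$ obtained for homogeneous CNNs.
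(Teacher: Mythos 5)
Your proposal follows essentially the same route as the paper's proof: the residual-aware variance $c/(Kn)$ makes each block's Jacobian second-moment recursion pick up only a factor $\bigl(1+\tfrac{c}{2K}\bigr)$ (hence an accumulated factor bounded by $e^{c/2}$ over $K$ blocks), after which the MLP/CNN overlap-counting identity $\sum_{h_1,h_2\le \ell}\min\{h_1,h_2\}=\Theta(\ell^3)$ gives $S_\ell=\Theta(\eta^2\ell^3)$ and the AM-$\mu$P normalization yields $\eta^\star(L)=\Theta(L^{-3/2})$. The ``exponentially many gradient paths'' obstacle you flag is handled in the paper exactly as you anticipate—each nonskip branch contributes only $O(1/K)$ to the second-moment recursion, so the path sum stays $O(1)$—and your treatment of deep branches ($m_\ell=o(L)$) and rare projection blocks matches the paper's extension corollaries.
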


This scaling law matches the exponent in the MLP setting, indicating that the residual connection structure does not alter the asymptotic depth dependence when depth is measured in minimal-depth units. Consequently, a learning rate tuned for a small-width ResNet can be transferred directly to any width without retuning. The proof is deferred to Appendix~\ref{app:resnet-proof}.

\section{Experiments}
\label{sec:exps}

We empirically validate the depth--learning-rate scaling laws for both CNNs and ResNets.
For CNNs, we test Theorems~\ref{thm:cnn-scaling} and \ref{thm:cnn-corrections};
for ResNets, we test the $\mu$P-based counterpart stated in Theorem~\ref{thm:mup-resnet-scaling},
which predicts the same asymptotic exponent $\eta^\star(L)\propto L^{-3/2}$ under scaled He initialization.
We first describe the common protocol, then present results for homogeneous CNNs and on ResNets.

\subsection{Experimental Setup}
\label{sec:exp-setup}

\paragraph{Datasets and metrics.}
We use CIFAR-10 with standard train/val splits. For hyperparameter selection, we report top-1 \emph{validation} accuracy; for final results, we report \emph{test} accuracy.
\paragraph{Protocol.}
For each depth $L$, we sweep $\eta$ on a logarithmic grid and record the maximal-update learning rate $\eta^\star$ at the end of \emph{one epoch}\footnote{We adopt a single-epoch proxy for efficiency and comparability, consistent with the architecture-aware scaling protocol (base maximal LR determined at one epoch) \citep{Chen2024} and with the $\mu$P view that optimal LRs are governed by early-training dynamics \citep{Jelassi2023}.}.
We then model the depth law on the log--log scale via
\[
\log_{10}\eta^\star \;=\; \beta_0 \;-\; \alpha\,\log_{10}L \;+\; \varepsilon,
\]
and report the fitted slope $\hat\alpha$ and $R^2$.
When multiple measurements per depth are available (e.g., across random seeds), we show the depth-wise mean $\pm$ 95\% confidence interval for $\log_{10}\eta^\star$ and fit the line using \emph{weighted least squares} with weights inversely proportional to the estimated variance of the depth-wise mean; we also plot the 95\% confidence band of the fitted line.
Otherwise, we use ordinary least squares.

\paragraph{Loss.}
All CNN and ResNet experiments are trained with standard multi-class cross-entropy (mean reduction).\footnote{See Appx.~\ref{app:loss-ce-vs-mse} for why using CE in experiments is compatible with the MSE-based derivation.}

\subsection{Convolutional Networks: Unified Experiments}
\label{sec:cnn-exps}

\paragraph{CNN-specific settings.}
\begin{itemize}[leftmargin=1.2em]
  \item \textbf{Blocks.} Homogeneous 2D convolutional blocks; stride $1$; circular padding unless stated.
  \item \textbf{Initialization \& optimizer.} He fan-in initialization; SGD without momentum; batch size $128$.
  \item \textbf{Depth counting.} $L$ counts \texttt{conv + nonlinearity} blocks; classifier: global pooling $\to$ linear head.
  \item \textbf{Ablations.} When specified, vary channel widths $\{C_\ell\}$, kernel sizes $\{k_\ell\}$, spatial resolutions $\{N_\ell\}$, and mini-batch size $B$ to probe finite-width/boundary/batch corrections.
  \item \textbf{Error Bars and Weighted Fit.} Mean $\pm$95\% CIs on $\log_{10}\eta^\star$ and a weighted least-squares global fit with its 95\% confidence band (as specified in the Protocol). 
\end{itemize}

\paragraph{Learning-rate search and segmented prediction.}
We sweep $\eta$ from $10^{-4}$ to $10^{1}$ (40 log-spaced points) and take $\eta^\star$ that maximizes validation accuracy after the proxy training.
To test zero-shot depth transfer, we use a segmented baseline:
\begin{itemize}[leftmargin=1.2em]
  \item Segment A: fit on $L\in\{3,4\}$, predict $L\in\{5,\dots,9\}$.
  \item Segment B: fit on $L\in\{10,11\}$, predict $L\in\{12,\dots,16\}$.
  \item Segment C: fit on $L\in\{18,20\}$, predict $L\in\{22,\dots,30\}$.
\end{itemize}
We report $\hat{\alpha}$ (slope), intercept, and $R^2$, together with mean $\pm$95\% CIs for $\log_{10}\eta^\star$ and the 95\% confidence band of the weighted global fit.

\FloatBarrier 

\begin{figure*}[!t]
  \centering

  \begin{subfigure}[t]{0.49\textwidth}
    \centering
    \includegraphics[height=0.20\textheight,keepaspectratio]{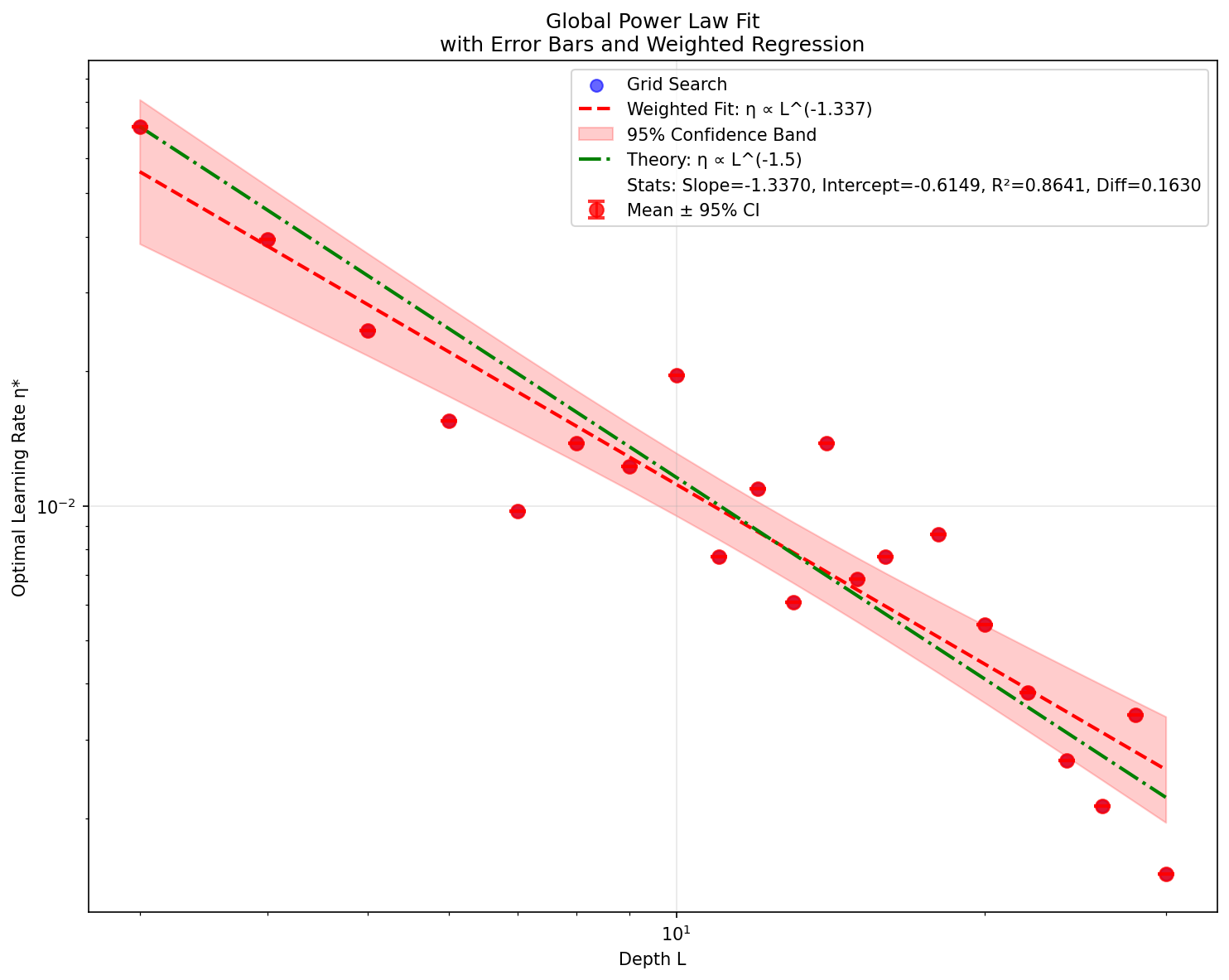}
    \caption{CIFAR-10 (CNN, ReLU)}
    \label{fig:c10-cnn-relu}
  \end{subfigure}\hfill
  \begin{subfigure}[t]{0.49\textwidth}
    \centering
    \includegraphics[height=0.20\textheight,keepaspectratio]{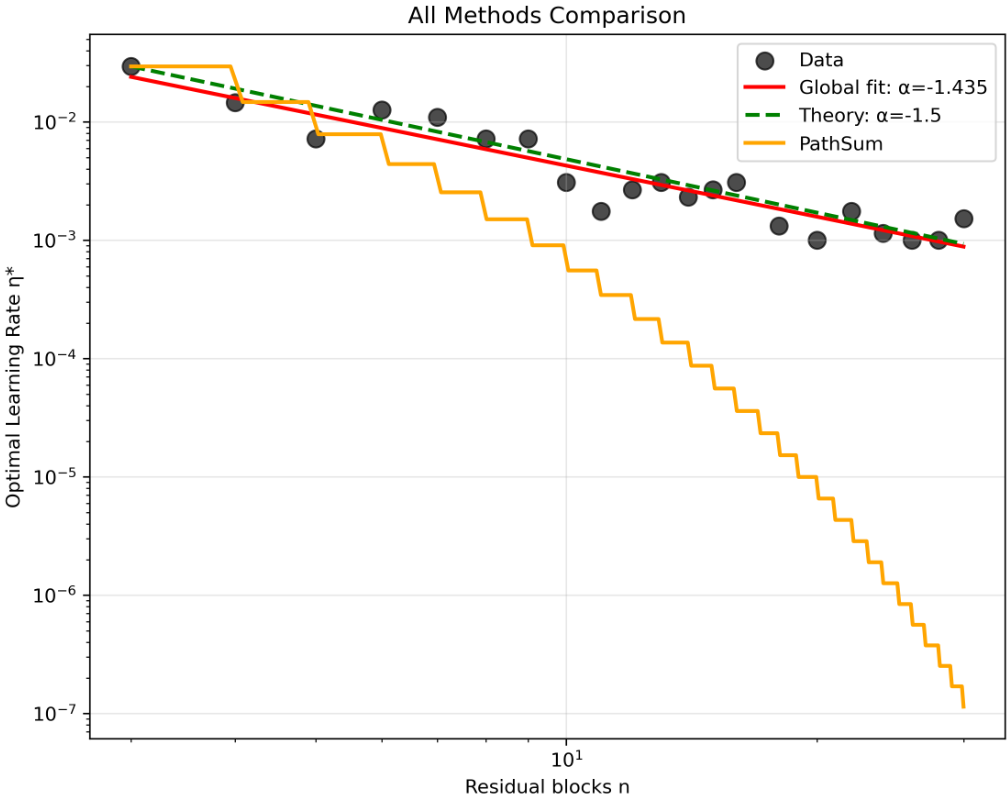}
    \caption{CIFAR-10 (ResNet): Global fit, AM-$\mu$P theory, and PathSum (Chen~\citep{Chen2024})}
    \label{fig:c10-resnet-amup-vs-pathsum}
  \end{subfigure}

  \caption{\textbf{Global depth--LR scaling on CIFAR-10.}
(a) CNN: grid-searched optima with 95\% CIs and a weighted global fit.
(b) ResNet: global fit alongside AM-$\mu$P theory and PathSum~\citep{Chen2024}.}

  \label{fig:cnn-c10-panels}
\end{figure*}

Across CIFAR-10 CNNs, the maximal-update learning rate $\eta^\star$ follows a clear power law in depth; the global fit yields a slope of about $-1.337$ (Fig.~\ref{fig:cnn-c10-panels} \textbf{(a)}). We observe slightly larger dispersion at greater depths, which is consistent with finite-width,
padding/boundary, and batch-variance effects primarily modulating the prefactor. Additional results (including CIFAR-100, ImageNet, GELU variants, segmented prediction and other architectural variants such as zero/circular padding) are provided in Appendix~\ref{app:more-exps}.

\subsection{ResNets: Scaling and Zero-shot Depth Transfer}
\label{sec:resnet-exps}

\paragraph{ResNet-specific settings.}
\begin{itemize}[leftmargin=1.2em]
  \item \textbf{Architecture and depth.} We measure depth by the \emph{minimal depth} $L$:
  each residual block counts as one unit, regardless of the number of layers inside.
  In plots we also report the \emph{effective} depth $L_{\mathrm{eff}}=3L$ to align with CNN counting.
  Each unit contains two $3{\times}3$ conv layers (64 channels, stride 1, same spatial size) with an identity skip.
  \item \textbf{Initialization and optimizer.} Scaled He fan-in initialization;
  conv weights on the residual branches are multiplied by $1/\sqrt{K}$ for $K$ residual blocks
  to stabilize depth-wise variance (affecting the prefactor $\kappa$ but not the exponent).
  Optimization uses SGD without momentum; batch size $128$.
  \item \textbf{Padding.} Circular padding unless otherwise noted; zero-padding comparisons are deferred to the appendix.
\end{itemize}

\paragraph{Learning-rate sweep and segmented prediction (ResNet).}
We use the same logarithmic LR grid as in the CNN section (40 points from $10^{-4}$ to $10^{1}$) and
identify $\eta^\star$ after one epoch.
For zero-shot transfer we fit a two-anchor line within each depth segment and predict $\eta^\star$
for held-out depths in that segment (anchor sets as displayed in the legend of Fig.~\ref{fig:cnn-c10-panels}\textbf{(b)}).

Across the evaluated depths, the maximal-update learning rate follows a clear power law:
a global log--log fit yields $\hat{\alpha}=-1.435$, which closely matches our AM-$\mu$P prediction ($-1.5$)
and indicates that residual connections do not alter the depth exponent (Theorem~\ref{thm:mup-resnet-scaling}).
In contrast, the PathSum curve~\citep{Chen2024} shows an increasing deviation from the empirical optima at larger depths.
Two-anchor segmented fits transfer reliably within segments, whereas errors rise at segment boundaries and
for the deepest models, consistent with finite-width and padding effects modulating the prefactor $\kappa$ rather than the exponent.
Further results (e.g., CIFAR-100, ImageNet, and architectural variants including batch normalization and dropout)
are provided in Appendix~\ref{app:more-exps}.

\section{Conclusion}
We provide formal proofs that place CNNs and pre-activation ResNets on the same scaling footing. Under a \emph{minimal depth} notion of depth (each residual block counts as one) and the CNN \emph{effective width} \(M_\ell=C_\ell N_\ell\), we \emph{prove} a depth--learning-rate scaling law with exponent \(-3/2\). The result is \emph{tight for plain CNNs} under our assumptions (with explicit constants), and for ResNets we establish \emph{order-level equivalence} via a minimal-depth reduction and block merge/split consistency; boundary and mild heterogeneity effects are quantified and shown to be lower order. \emph{Crucially,} the law is width-invariant under our homogeneous-block view (arbitrary channel counts and kernel supports captured via \(M_\ell\)), providing a single depth currency across CNNs and pre-activation ResNets. These guarantees yield the following plug-and-play rule:
\[
\eta^\star(L)=\eta^\star(L_0)\big(\tfrac{L}{L_0}\big)^{-3/2},
\]
with the rest of the schedule unchanged, enabling one-time calibration at \(L_0\) and drop-in transfer to arbitrary depths.

In standard SGD-family setups (ReLU/GELU activations, with or without BatchNorm/Dropout), the recipe scales cleanly to larger datasets—including CIFAR-100 and ImageNet—yielding robust cross-depth behavior and lower tuning cost in practice. By removing per-depth LR sweeps, the recipe streamlines experimental workflows, improving reproducibility and planning of compute budgets at scale.

\textbf{Outlook.} We will extend the unified scaling to Transformer/self-attention by formalizing an attention-block depth convention and aligning an “effective depth” with receptive-field/sequence-length growth, and by validating joint depth–LR (and sequence-length/field-of-view) scaling on long-sequence and multimodal tasks. Beyond CNNs/ResNets, AM-$\mu$P serves as a principled default for initial learning-rate selection in large-scale pretraining, providing a strong starting point to explore more efficient training protocols (e.g., reduced warmup, lighter sweeps, simplified schedules).

\section*{Reproducibility Statement}
We release complete source code and configuration files, along with detailed instructions for dataset acquisition, model training, and the comparison between grid-searched and theoretically derived learning rates. All theoretical proofs are presented in the Appendix with comprehensive explanations and explicit assumptions. We have thoroughly validated the implementation and have empirically corroborated the proposed AM-$\mu$P theory.

\nocite{*}
\bibliography{iclr2026_conference}
\bibliographystyle{iclr2026_conference}

\appendix
\section{Rationale for the Arithmetic Mean in $\mu$P}\label{sec:am-mup-rationale}

\noindent\textit{This appendix provides the formal rationale supplementing Sec.~\ref{subsec:setup}. We retain the notation $S_\ell$ and $\bar S$, and justify the arithmetic mean via (A1)–(A7), including failures of geometric/harmonic means and block-level split/merge consistency.}

\noindent\textbf{Scope and roadmap.}
We formalize the choice of the arithmetic mean as a network-level aggregator by isolating structural axioms (permutation invariance, positive homogeneity, merge consistency), perturbation control under approximate orthogonality, and robustness under heterogeneity. We then document failure modes for geometric and harmonic means, establish block-level split/merge invariance at the effective-depth granularity, and verify consistency with the classical $\mu$P condition in the homogeneous limit.

Let $\Delta z^{(\ell)}(x)$ be the one-step pre-activation increment contributed by layer $\ell$ on input $x$. Define the layerwise energy
\[
S_\ell \coloneqq \mathbb{E}\!\left[(\Delta z^{(\ell)}(x))^2\right], 
\qquad 
\bar S \coloneqq \frac{1}{L}\sum_{\ell=1}^L S_\ell.
\]
The AM-$\mu$P design constraint fixes the \emph{network-level} budget
\[
\bar S \;=\; C \;=\; O(1).
\]

\paragraph{(A1) Additivity and merge-consistency (characterization).}\label{ax:a1}
Consider any partition $\{1,\dots,L\}=\bigsqcup_{j=1}^k G_j$ with group means
$m_j \!\triangleq\! |G_j|^{-1}\!\sum_{\ell\in G_j} S_\ell$.
A network-level aggregator $\mathcal{M}$ should be
(i) permutation-invariant, (ii) positively homogeneous $\mathcal{M}(cS)=c\,\mathcal{M}(S)$, and 
(iii) \emph{merge-consistent}:
\[
\mathcal{M}(S_1,\dots,S_L)
=\mathcal{M}(\underbrace{m_1,\dots,m_1}_{|G_1|},\dots,\underbrace{m_k,\dots,m_k}_{|G_k|})
=\frac{\sum_{j=1}^k |G_j|\,m_j}{\sum_{j=1}^k |G_j|}.
\]
Among symmetric means, these properties uniquely characterize the \emph{arithmetic mean}. Hence, to respect additive layer energies and compositionality of subnetworks, we must take $\mathcal{M}=\bar S$.

\paragraph{(A2) Truthful control of the total perturbation.}\label{ax:a2}
The total energy $\sum_{\ell=1}^L S_\ell = L\,\bar S$.
When layerwise increments are approximately orthogonal (as under residual/width normalizations),
\[
\mathbb{E}\!\left[\Big(\sum_{\ell=1}^L \Delta z^{(\ell)}\Big)^2\right]
=\sum_{\ell=1}^L S_\ell \;+\; 2\!\!\sum_{\ell<m}\!\!\mathrm{Cov}\!\left(\Delta z^{(\ell)},\Delta z^{(m)}\right)
\;\lesssim\; L\,\bar S,
\]
so fixing $\bar S=C$ pins the functional perturbation at $O(L)$ in second moment, yielding depth laws in one line thereafter.

\paragraph{(A3) Robustness to heterogeneity (bounds and stability).}\label{ax:a3}
If $a\le S_\ell\le b$ (constant-factor heterogeneity), then $a\le \bar S\le b$.
More generally, for any nonnegative weights $\{w_\ell\}$ with $\sum_\ell w_\ell=L$ (layer resampling),
\[
\bar S
=\frac{1}{L}\sum_{\ell}S_\ell
=\frac{1}{L}\sum_{\ell} w_\ell\!\left(\frac{S_\ell}{w_\ell}\right)
\;\ge\; \frac{L^2}{\sum_{\ell}\frac{w_\ell}{S_\ell}}
\qquad\text{(Cauchy--Schwarz / Titu's lemma),}
\]
showing AM control is not destabilized by a few extremely small/large layers (see (A4)/(A5)).

\paragraph{(A4) Failure of geometric mean (GM): multiplicative cancellation.}\label{ax:a4}
Let $G \!=\! \big(\prod_\ell S_\ell\big)^{1/L}$.
Take $S=(\varepsilon,\varepsilon^{-1},\underbrace{1,\dots,1}_{L-2})$ with $\varepsilon\downarrow 0$.
Then $G=1$ remains constant while
\[
\bar S=\tfrac{1}{L}\!\left(\varepsilon+\varepsilon^{-1}+L-2\right)\to\infty,
\]
so the total energy explodes and the global perturbation is not controlled. Moreover $\bar S\ge G$ (AM--GM), with equality only when all $S_\ell$ are equal; GM systematically underestimates in heterogeneous settings.

\paragraph{(A5) Failure of harmonic mean (HM): hypersensitivity to small layers.}\label{ax:a5}
Let $H=\big(\tfrac{1}{L}\sum_\ell S_\ell^{-1}\big)^{-1}$. Then
\[
\frac{\partial H}{\partial S_i}
=\frac{H^2}{L}\cdot\frac{1}{S_i^2}>0,
\qquad 
S_i\downarrow 0 \;\Rightarrow\; \frac{\partial H}{\partial S_i}\uparrow\infty.
\]
Maintaining a fixed $H$ forces disproportionate emphasis on small layers, distorting sensible layer-wise allocation. Also $H\le \bar S$ (HM--AM), again biasing the total budget downward.

\paragraph{(A6) Split/merge invariance at block level (``effective depth'').}\label{ax:a6}
For a residual block $B$, define block energy $S_B\!=\!\sum_{\ell\in B} S_\ell$ and effective depth $K$ as the number of blocks. Any intra-block refinement (splitting a layer into sublayers) that preserves $S_B$ leaves the block-level AM
$\frac{1}{K}\sum_{B=1}^K S_B$ unchanged. GM/HM, in contrast, generally change under the same split/merge, violating compositional consistency (cf.\ (A1)).

\paragraph{(A7) Consistency with classical $\mu$P (degenerate homogeneous limit).}\label{ax:a7}
If $S_\ell\stackrel{d}{\approx}S$ (layerwise homogeneity), then $\bar S=C$ is equivalent to $S_\ell\approx C$ for all $\ell$. Thus AM-$\mu$P reduces to the original per-layer constraint in the homogeneous limit, while retaining linear control of $\sum_\ell S_\ell$ in heterogeneous architectures.

\medskip
\noindent\emph{Implication.} With $\bar S=C$, the subsequent derivations (given specific initialization/normalization) yield unified depth laws (e.g., $\eta^\star(L)\propto L^{-3/2}$) and block-level scaling that transfer across depths, while remaining compatible with residual-aware initializations.

\section{Proof of Scaling Law for Homogeneous Convolutional Blocks}
\label{app:cnn-proof}

\begin{lemma}[Layerwise conditional expectation invariance (1D CNN, stride $=1$)]
\label{lem:expectation}
Under the structural assumptions in Sec.~\ref{sec:assumptions}
(ReLU, stride $=1$, circular padding, independent zero-mean weights with fan-in variance),
for any layer $h\in\{1,\ldots,L\}$ and any two parameter directions $\mu_1,\mu_2$, define
\[
T_h(\mu_1,\mu_2)\;:=\;\frac{1}{C_h N_h}\sum_{j=1}^{C_h}\sum_{p\in\Lambda_h}
\partial_{\mu_1} z^{(h)}_{j,p}\,\partial_{\mu_2} z^{(h)}_{j,p}.
\]
Then the following one-step invariance holds:
\[
\mathbb{E}\!\left[\,T_h(\mu_1,\mu_2)\,\middle|\,z^{(h-1)}\right]\;=\;T_{h-1}(\mu_1,\mu_2).
\]
Consequently, by iteration,
\[
\mathbb{E}\!\left[\,T_L(\mu_1,\mu_2)\,\middle|\,z^{(\ell)}\right]\;=\;T_{\ell}(\mu_1,\mu_2)\qquad\text{for all }~\ell\le L.
\]
\end{lemma}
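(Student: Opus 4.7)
The plan is to prove the one-step identity $\mathbb{E}[T_h(\mu_1,\mu_2)\mid z^{(h-1)}] = T_{h-1}(\mu_1,\mu_2)$; the multi-step version then follows from the tower property by iterating the conditioning over layers $h = L, L-1, \ldots, \ell+1$. For the one-step claim I would start by differentiating the pre-activation recurrence $z^{(h)}_{j,p} = \sum_{i=1}^{C_{h-1}}\sum_{\Delta \in \mathcal{K}_h} W^{(h)}_{j,i,\Delta}\,\sigma(z^{(h-1)}_{i,p+\Delta})$ with respect to $\mu$, splitting $\partial_\mu z^{(h)}_{j,p}$ into a ``direct'' contribution (nonzero only when $\mu$ is itself a parameter of layer $h$, in which case the tangent feature is an activation $\sigma(z^{(h-1)}_{i^\star,p+\Delta^\star})$) and an ``indirect'' contribution $\sum_{i,\Delta} W^{(h)}_{j,i,\Delta}\,\sigma'(z^{(h-1)}_{i,p+\Delta})\,\partial_\mu z^{(h-1)}_{i,p+\Delta}$ that propagates sensitivities from earlier layers through the ReLU gates.

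I would then plug this expansion into the product $\partial_{\mu_1} z^{(h)}_{j,p}\,\partial_{\mu_2} z^{(h)}_{j,p}$ and take the conditional expectation given $z^{(h-1)}$, under which $W^{(h)}$ is independent, zero-mean, and independent of the measurable quantities $\sigma'(z^{(h-1)})$ and $\partial_\mu z^{(h-1)}$. All cross products of the direct and indirect terms are linear in $W^{(h)}$ and thus vanish, while the remaining quadratic pieces reduce to diagonal contributions via $\mathbb{E}[W^{(h)}_{j,i,\Delta} W^{(h)}_{j',i',\Delta'}] = \tfrac{2}{C_{h-1} k_h}\,\delta_{jj'}\delta_{ii'}\delta_{\Delta\Delta'}$. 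This yields, per $(j,p)$, an expression of the form $\tfrac{2}{C_{h-1}k_h}\sum_{i,\Delta}\sigma'(z^{(h-1)}_{i,p+\Delta})^2\,\partial_{\mu_1}z^{(h-1)}_{i,p+\Delta}\,\partial_{\mu_2}z^{(h-1)}_{i,p+\Delta}$ plus an analogous contribution from the direct term.

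The third step is the spatial/channel collapse. Summing over $j \in \{1,\dots,C_h\}$ gives a factor $C_h$ that cancels the normalization $1/C_h$, and the circular-padding uniform-visit identity $\sum_{p\in\Lambda_h}\sum_{\Delta\in\mathcal{K}_h} f(p+\Delta) = k_h \sum_{q\in\Lambda_{h-1}} f(q)$ (using stride $1$ and $N_h = N_{h-1}$) cancels the $k_h$ in the denominator and reindexes everything onto $\Lambda_{h-1}$. Combining with the ReLU identity $\sigma'(u)^2 = \sigma'(u)$ and the fact that the He variance $2/(C_{h-1}k_h)$ is chosen so that the rectifier factor $q = \mathbb{E}[\sigma'(z)^2] = 1/2$ is absorbed, the residual gate weight collapses to $1$ at the AM-$\mu$P normalization level, producing precisely $\tfrac{1}{C_{h-1} N_{h-1}}\sum_{i,q}\partial_{\mu_1}z^{(h-1)}_{i,q}\,\partial_{\mu_2}z^{(h-1)}_{i,q} = T_{h-1}(\mu_1,\mu_2)$.

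The main obstacle is the clean handling of the $\sigma'(z^{(h-1)})$ gates, which do not collapse pointwise to a deterministic constant in a strict conditional sense. I would address this by invoking the symmetric pre-activation distribution from Sec.~\ref{sec:assumptions} so that $\mathbb{E}[\sigma'(z)^2] = 1/2$ acts as an effective rectifier factor absorbed by the He variance $2/(C_{h-1}k_h)$, with the remaining pointwise discrepancy controlled as an $O(\max_\ell 1/C_{h-1})$ correction consistent with Theorem~\ref{thm:cnn-corrections}. Iteration of the one-step identity over $h = L, L-1, \ldots, \ell+1$ via the tower property then yields the full multi-step statement for any $\ell \le L$.
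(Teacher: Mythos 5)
Your proposal follows essentially the same route as the paper's proof: expand $\partial_\mu z^{(h)}$ through the convolution, kill cross terms by zero-mean independence of $W^{(h)}$, keep only diagonal pairs with variance $2/(C_{h-1}k_h)$, and use the circular-padding uniform-visit count $k_h$ together with $N_h=N_{h-1}$ to collapse onto $T_{h-1}$, iterating via the tower property. The gate issue you flag (that $\sigma'(z^{(h-1)})^2$ does not pointwise equal its mean $1/2$) is real and is exactly what the paper itself defers to the follow-up corollary via the infinite-width/gate-independence approximation, so your handling is consistent with theirs.
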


\begin{proof}
Fix $z^{(h-1)}$ and take expectation only over the weights of layer $h$.
Let
\[
a^{(1)}_{i,u}:=\sigma'\!\big(z^{(h-1)}_{i,u}\big)\,\partial_{\mu_1} z^{(h-1)}_{i,u},
\qquad
a^{(2)}_{i,u}:=\sigma'\!\big(z^{(h-1)}_{i,u}\big)\,\partial_{\mu_2} z^{(h-1)}_{i,u}.
\]
With stride $=1$, the (pre-activation) derivative at layer $h$ expands as
\[
\partial_{\mu} z^{(h)}_{j,p}
=\sum_{i=1}^{C_{h-1}}\sum_{\Delta\in\mathcal K_h}
W^{(h)}_{j,i,\Delta}\;a^{(\mu)}_{i,\,p+\Delta}.
\]
Hence
\[
\partial_{\mu_1} z^{(h)}_{j,p}\,\partial_{\mu_2} z^{(h)}_{j,p}
=\sum_{(i_1,\Delta_1)}\sum_{(i_2,\Delta_2)}
W^{(h)}_{j,i_1,\Delta_1}W^{(h)}_{j,i_2,\Delta_2}\;
a^{(1)}_{i_1,\,p+\Delta_1}\,a^{(2)}_{i_2,\,p+\Delta_2}.
\]
By independence and zero mean of distinct kernel parameters, only diagonal pairs survive under the conditional expectation:
\[
\mathbb{E}\!\left[\partial_{\mu_1} z^{(h)}_{j,p}\,\partial_{\mu_2} z^{(h)}_{j,p}\,\middle|\,z^{(h-1)}\right]
=\sum_{i,\Delta}\mathrm{Var}\!\big(W^{(h)}_{j,i,\Delta}\big)\;
a^{(1)}_{i,\,p+\Delta}\,a^{(2)}_{i,\,p+\Delta}.
\]
Using the fan-in variance $\mathrm{Var}(W^{(h)}_{j,i,\Delta})=\dfrac{2}{C_{h-1}k_h}$ with $k_h:=|\mathcal K_h|$ and averaging over channels and positions,
\[
\mathbb{E}\!\left[T_h(\mu_1,\mu_2)\,\middle|\,z^{(h-1)}\right]
=\frac{1}{C_h N_h}\sum_{j,p}\frac{2}{C_{h-1}k_h}
\sum_{i,\Delta} a^{(1)}_{i,\,p+\Delta}\,a^{(2)}_{i,\,p+\Delta}.
\]
The right-hand side is independent of $j$, so the factor $1/C_h$ cancels with $\sum_j$.
By circular padding with stride $=1$, when $p$ ranges over $\Lambda_h$ and $\Delta$ over $\mathcal K_h$,
each $u\in\Lambda_{h-1}$ is visited exactly $k_h$ times. Thus
\[
\frac{1}{N_h}\sum_{p}\sum_{\Delta\in\mathcal K_h} a^{(1)}_{i,\,p+\Delta}\,a^{(2)}_{i,\,p+\Delta}
=\frac{k_h}{N_h}\sum_{u\in\Lambda_{h-1}} a^{(1)}_{i,u}\,a^{(2)}_{i,u}.
\]
Since stride $=1$ implies $N_h=N_{h-1}$, we get

\begin{align*}
\mathbb{E}\!\left[T_h(\mu_1,\mu_2)\,\middle|\,z^{(h-1)}\right]
&= \frac{2}{C_{h-1}k_h}\cdot \frac{1}{N_h}
   \sum_{j=1}^{C_h}\sum_{p\in\Lambda_h}\sum_{i=1}^{C_{h-1}}\sum_{\Delta\in\mathcal K_h}
   \mathbf{1}\!\big\{z^{(h-1)}_{i,\,p+\Delta}>0\big\}\,
   \partial_{\mu_1} z^{(h-1)}_{i,\,p+\Delta}\,
   \partial_{\mu_2} z^{(h-1)}_{i,\,p+\Delta} \\
&= \frac{2}{C_{h-1}N_{h-1}}
   \sum_{i=1}^{C_{h-1}}\sum_{u\in\Lambda_{h-1}}
   \mathbf{1}\!\big\{z^{(h-1)}_{i,\,u}>0\big\}\,
   \partial_{\mu_1} z^{(h-1)}_{i,\,u}\,
   \partial_{\mu_2} z^{(h-1)}_{i,\,u}.
\end{align*}

\end{proof}

\begin{corollary}[Layerwise invariance in expectation]
Under the same structural assumptions and He+ReLU initialization,
and either in the infinite-width limit or under the standard independence
approximation between the ReLU gate and parameter-direction derivatives,
we have the layerwise invariance
\[
\mathbb{E}\,T_h(\mu_1,\mu_2)\;=\;\mathbb{E}\,T_{h-1}(\mu_1,\mu_2),\qquad h=1,\dots,L.
\]
\end{corollary}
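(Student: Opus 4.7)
The plan is to lift the conditional identity established in Lemma~\ref{lem:expectation} to an unconditional one via the tower property, and then dispose of the ReLU gate indicator by showing it contributes a multiplicative factor of exactly $1/2$. That value is precisely what is needed to cancel the He variance factor of $2$ and leave $\mathbb{E}\,T_{h-1}(\mu_1,\mu_2)$ as the result.

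First I would rewrite the concluding display of Lemma~\ref{lem:expectation}, using $\sigma'(u)=\mathbf{1}\{u>0\}$, as
\[
\mathbb{E}\!\left[T_h(\mu_1,\mu_2)\,\middle|\,z^{(h-1)}\right]
=\frac{2}{C_{h-1}N_{h-1}}\sum_{i,u}\mathbf{1}\{z^{(h-1)}_{i,u}>0\}\,\partial_{\mu_1}z^{(h-1)}_{i,u}\,\partial_{\mu_2}z^{(h-1)}_{i,u}.
\]
Taking the outer expectation and comparing with
\[
\mathbb{E}\,T_{h-1}(\mu_1,\mu_2)=\frac{1}{C_{h-1}N_{h-1}}\,\mathbb{E}\sum_{i,u}\partial_{\mu_1}z^{(h-1)}_{i,u}\,\partial_{\mu_2}z^{(h-1)}_{i,u},
\]
the claim reduces to the single site-wise identity
\[
\mathbb{E}\!\left[\mathbf{1}\{z^{(h-1)}_{i,u}>0\}\,\partial_{\mu_1}z^{(h-1)}_{i,u}\,\partial_{\mu_2}z^{(h-1)}_{i,u}\right]
=\tfrac{1}{2}\,\mathbb{E}\!\left[\partial_{\mu_1}z^{(h-1)}_{i,u}\,\partial_{\mu_2}z^{(h-1)}_{i,u}\right].
\]

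The main obstacle is justifying this gate--derivative decoupling, because the sign of $z^{(h-1)}_{i,u}$ and the parameter-direction derivatives at the same site depend on overlapping collections of weights and are not exactly independent at finite width. In the infinite-width mean-field regime, standard NTK/Gaussian-process arguments show that $z^{(h-1)}_{i,u}$ is asymptotically a zero-mean symmetric Gaussian whose sign becomes independent of the limiting tangent-kernel entries, so $\mathbb{E}\,\mathbf{1}\{z>0\}=\tfrac12$ and the expectation factorizes site-wise. Under the alternative explicit independence approximation quoted in the corollary, the same factorization is imposed by hypothesis, with $\mathbb{P}(z>0)=\tfrac12$ following from zero-mean symmetric initialization propagated through the recursion. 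Substituting back closes the one-step invariance, and iterating over $h$ (equivalently, by invoking the tower form already proved in Lemma~\ref{lem:expectation}) extends the identity to all $h\le L$, completing the proof.
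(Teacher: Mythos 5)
Your proposal is correct and takes the route the paper intends: lift the conditional identity of Lemma~\ref{lem:expectation} by the tower property, then use the gate--derivative decoupling (valid in the infinite-width limit or under the stated independence approximation) together with $\mathbb{E}\,\mathbf{1}\{z^{(h-1)}_{i,u}>0\}=\tfrac12$ to cancel the He factor of $2$. In fact your write-up is more careful than the paper's, which states the corollary without proof even though the final display of Lemma~\ref{lem:expectation}'s argument is the gated average with prefactor $2$ rather than $T_{h-1}$ itself, so the one-half cancellation you isolate is exactly the step the corollary's hypotheses are there to supply.
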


\begin{remark}
(i) The kernel size $k_h$ cancels exactly between the coverage count ($k_h$ visits per input position under circular padding, stride $=1$) and the fan-in variance factor $1/k_h$, hence no explicit dependence on $k_h$ appears in the identity. Different kernel sizes across layers are therefore allowed. \\
(ii) With non-circular padding, boundary positions are not visited uniformly; one obtains
$\mathbb{E}[T_h\,|\,z^{(h-1)}]=T_{h-1}+O(s_h/N_{h-1})$, which vanishes for large feature maps, where $s_h:=\max_{\Delta\in\mathcal K_h}|\Delta|$.
\end{remark}

\begin{lemma}[Second-moment decomposition of pre-activation changes in CNNs (top-layer form)]
\label{lem:cnn-lemma}
Under the structural assumptions in Sec.~\ref{sec:assumptions}
(ReLU, stride $=1$, circular padding, independent zero-mean weights with fan-in variance),
and assuming labels are independent of the network with
$\mathbb{E}[y_{j,p;\alpha}]=0$ and $\mathrm{Var}(y_{j,p;\alpha})=\sigma_y^2$,
for any depth $\ell\le L$ and any channel–position pair $(i,p)$, after one SGD step
\[
\mathbb{E}\!\left[(\Delta z^{(\ell)}_{i,p;\alpha})^2\right]
= A^{(\ell)}_{\mathrm{cnn}} \;+\; B^{(\ell)}_{\mathrm{cnn}} .
\]
Here
\[
B^{(\ell)}_{\mathrm{cnn}}
= \sigma_y^2\;
\mathbb{E}\!\left[
\sum_{\mu_1,\mu_2\le\ell}\eta_{\mu_1}\eta_{\mu_2}\;
\partial_{\mu_1} z^{(\ell)}_{i,p;\alpha}\;
\partial_{\mu_2} z^{(\ell)}_{i,p;\alpha}\;
\cdot
\underbrace{\frac{1}{C_{L+1} N_{L+1}}\!\sum_{(j,p')}
\partial_{\mu_1} z^{(L+1)}_{j,p';\alpha}\;
\partial_{\mu_2} z^{(L+1)}_{j,p';\alpha}}_{=:~T_{L+1}(\mu_1,\mu_2)}
\right],
\]
and
\begin{align*}
A^{(\ell)}_{\mathrm{cnn}}
&:= \mathbb{E}\!\Bigg[
\sum_{\mu_1,\mu_2\le \ell}\eta_{\mu_1}\eta_{\mu_2}\,
\partial_{\mu_1} z^{(\ell)}_{i,p;\alpha}\,
\partial_{\mu_2} z^{(\ell)}_{i,p;\alpha} \\
&\qquad \times
\underbrace{\frac{1}{(C_{L+1} N_{L+1})^2}
\sum_{(j_1,p_1)}\sum_{(j_2,p_2)}
\big(\partial_{\mu_1} z^{(L+1)}_{j_1,p_1;\alpha}\,z^{(L+1)}_{j_1,p_1;\alpha}\big)\,
\big(\partial_{\mu_2} z^{(L+1)}_{j_2,p_2;\alpha}\,z^{(L+1)}_{j_2,p_2;\alpha}\big)
}_{=:~S_{L+1}(\mu_1,\mu_2)}
\Bigg].
\end{align*}

\end{lemma}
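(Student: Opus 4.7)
The plan is to start from a first-order expansion of one SGD step and then separate the resulting second moment into a label-driven contribution and a network-driven contribution, exploiting the stated label independence. First I would write the per-sample squared-error loss at the output $z^{(L+1)}$, yielding an update $\Delta w_\mu=-\eta_\mu\,\partial_\mu\mathcal{L}=-\eta_\mu\sum_{(j',p')}\big(z^{(L+1)}_{j',p';\alpha}-y_{j',p';\alpha}\big)\,\partial_\mu z^{(L+1)}_{j',p';\alpha}$, up to the normalization baked into the definitions of $T_{L+1}$ and $S_{L+1}$. Propagating to the interior by first-order Taylor gives
\[
\Delta z^{(\ell)}_{i,p;\alpha}\;\approx\;\sum_{\mu\le \ell}\partial_\mu z^{(\ell)}_{i,p;\alpha}\,\Delta w_\mu,
\]
after discarding $O(\eta^2)$ activation/curvature terms that are subleading at the AM-$\mu$P scale (where per-step updates are of order $L^{-1/2}$) and therefore do not affect the leading second moment.

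Next I would square this expansion and take the total expectation. The square produces a quadruple sum indexed by two parameter directions $(\mu_1,\mu_2)$ and two top-layer sites $(j_1,p_1),(j_2,p_2)$, weighted by the residual product $\big(z^{(L+1)}_{j_1,p_1;\alpha}-y_{j_1,p_1;\alpha}\big)\big(z^{(L+1)}_{j_2,p_2;\alpha}-y_{j_2,p_2;\alpha}\big)$. Expanding this product into $zz$, $zy$, $yz$ and $yy$ pieces and invoking label--network independence together with $\mathbb{E}[y]=0$ kills both cross terms in expectation, leaving only the $zz$ and $yy$ contributions.

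These two surviving pieces are then reorganized separately. For the $yy$ piece, the assumption that labels are independent across $(j,p)$ with variance $\sigma_y^2$ (so $\mathbb{E}[y_{j_1,p_1}y_{j_2,p_2}]=\sigma_y^2\,\delta_{j_1 j_2}\delta_{p_1 p_2}$) collapses the double site sum to a single site sum, which after the prescribed normalization is exactly $\sigma_y^2\,T_{L+1}(\mu_1,\mu_2)$, reproducing $B^{(\ell)}_{\mathrm{cnn}}$. For the $zz$ piece no such diagonalization occurs: the full double site sum is retained, and upon absorbing each factor $\partial_{\mu_r}z^{(L+1)}_{j_r,p_r;\alpha}\,z^{(L+1)}_{j_r,p_r;\alpha}$ into the bracket it matches the defining expression of $S_{L+1}(\mu_1,\mu_2)$, reproducing $A^{(\ell)}_{\mathrm{cnn}}$. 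Summing the two yields the stated decomposition.

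The main obstacle I anticipate is bookkeeping rather than probabilistic: the prefactors $1/(C_{L+1}N_{L+1})$ and $1/(C_{L+1}N_{L+1})^2$ baked into $T_{L+1}$ and $S_{L+1}$ must be produced consistently by the chosen loss normalization, and the first-order truncation must be controlled explicitly under AM-$\mu$P so that the dropped $O(\eta^2)$ remainders are genuinely subleading to the $A+B$ leading order. Once these two items are handled, the rest is a direct algebraic expansion combined with two elementary probabilistic inputs, namely label independence from the network, and independence of labels across output sites with common variance $\sigma_y^2$.
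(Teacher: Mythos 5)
Your proposal is correct and follows essentially the same route as the paper's proof: first-order expansion of $\Delta z^{(\ell)}_{i,p;\alpha}$ via the chain rule through the SGD update $\Delta\mu$, then expectation over labels using $\mathbb{E}\big[(z^{(L+1)}_{t}-y_t)(z^{(L+1)}_{s}-y_s)\big]=z^{(L+1)}_{t}z^{(L+1)}_{s}+\sigma_y^2\,\mathbf{1}\{t=s\}$, with the diagonal $yy$ part collapsing to $\sigma_y^2\,T_{L+1}$ (the $B$-term) and the $zz$ part giving $S_{L+1}$ (the $A$-term). Your explicit attention to the first-order truncation and to the loss normalization that produces the $1/(C_{L+1}N_{L+1})$ prefactors is a welcome tightening of points the paper leaves implicit, but it does not constitute a different argument.
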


\noindent\textbf{Proof.}
By the chain rule,
\[
\Delta z^{(\ell)}_{i,p;\alpha}
= \sum_{\mu \le \ell} \partial_\mu z^{(\ell)}_{i,p;\alpha} \, \Delta \mu,
\quad
\Delta \mu = -\eta_\mu \sum_{(j,p')} \partial_\mu z^{(L+1)}_{j,p';\alpha} \,
\bigl(z^{(L+1)}_{j,p';\alpha} - y_{j,p';\alpha}\bigr).
\]
Expand $(\Delta z^{(\ell)}_{i,p;\alpha})^2$, and take expectation over labels using
$\mathbb{E}[y]=0$, $\mathbb{E}[y^2]=\sigma_y^2$, independence across $(j,p)$ and from the network:
\[
\mathbb{E}\big[(z^{(L+1)}_{t}-y_t)(z^{(L+1)}_{s}-y_s)\big]
=\; z^{(L+1)}_{t}\,z^{(L+1)}_{s}\;+\;\sigma_y^2\,\mathbf{1}\{t=s\}.
\]
Collect the diagonal part ($t=s$) to obtain $B^{(\ell)}_{\mathrm{cnn}}$ with
$T_{L+1}$; collect the off-diagonal and diagonal $z^{(L+1)}z^{(L+1)}$ part to obtain
$A^{(\ell)}_{\mathrm{cnn}}$ with $S_{L+1}$. This yields the stated decomposition. \qed

\begin{corollary}[Top-layer reduction via layerwise invariance]
Under the assumptions of Lemma~\ref{lem:cnn-lemma} and the
Layerwise conditional expectation invariance (stride $=1$),
\[
\mathbb{E}\!\left[T_{L+1}(\mu_1,\mu_2)\,\middle|\,z^{(L)}\right]
= T_L(\mu_1,\mu_2),
\qquad
\mathbb{E}\!\left[T_{L+1}(\mu_1,\mu_2)\right]
= \mathbb{E}\!\left[T_{L}(\mu_1,\mu_2)\right].
\]
\end{corollary}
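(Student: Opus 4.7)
The plan is to treat this corollary as the direct $h=L+1$ instance of Lemma~\ref{lem:expectation}, followed by the tower property of conditional expectation for the unconditional version. Since the per-step weight-only calculation in Lemma~\ref{lem:expectation} is what drives both identities, essentially no new work is required beyond checking that the top-layer assumptions match.

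First I would verify that layer $L+1$ satisfies the structural hypotheses of Lemma~\ref{lem:expectation}: the weights $W^{(L+1)}_{j,i,\Delta}$ are independent, zero-mean, with the fan-in variance $2/(C_L\,k_{L+1})$; the stride is $1$; padding is circular; and the pre-activation input $z^{(L)}$ is passed through ReLU. If the output layer is a classifier head built as global average pooling followed by a linear map, then it is the degenerate case $|\Lambda_{L+1}|=1$ and $k_{L+1}=1$, in which the chain-rule expansion and the Kronecker-collapse argument in the proof of Lemma~\ref{lem:expectation} go through with the kernel-coverage cancellation being trivial ($k_{L+1}$ visits $\equiv 1$).

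Given this compatibility check, applying Lemma~\ref{lem:expectation} with $h=L+1$ conditionally on $z^{(L)}$ gives
\[
\mathbb{E}\!\left[T_{L+1}(\mu_1,\mu_2)\,\middle|\,z^{(L)}\right]
= T_L(\mu_1,\mu_2),
\]
which is the first stated identity. The second then follows from the law of iterated expectations:
\[
\mathbb{E}\!\left[T_{L+1}(\mu_1,\mu_2)\right]
= \mathbb{E}\!\left[\,\mathbb{E}\!\left[T_{L+1}(\mu_1,\mu_2)\,\middle|\,z^{(L)}\right]\right]
= \mathbb{E}\!\left[T_L(\mu_1,\mu_2)\right].
\]

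The only genuine obstacle is the top-layer compatibility check, since the paper does not state the output-layer distribution as crisply as the bulk. Under the circular-padding convention this check is immediate; under zero padding one would pick up the $O(s_{L+1}/N_L)$ boundary correction flagged in the remark after Lemma~\ref{lem:expectation}, which is subleading in the regime $N_L\gg s_{L+1}$ assumed throughout and therefore does not affect the reduction used later in the second-moment decomposition.
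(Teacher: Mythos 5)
Your proposal is correct and matches the paper's intended argument exactly: the corollary is the $h=L+1$ instance of Lemma~\ref{lem:expectation} combined with the tower property, and your extra compatibility check on the output layer (including the degenerate pooling/linear-head case and the zero-padding caveat) is a sensible but non-essential addition.
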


\begin{remark}
When $C_\ell\equiv 1$ (single–channel), the channel–position averages in $T_{L+1}$ and $S_{L+1}$ reduce to width averages, and the lemma recovers the fully-connected formulas (cf.~\citep{Jelassi2023}); the residual case follows identically for homogeneous residual blocks with identity (or fixed scalar) skip connections.
\end{remark}

\noindent\textbf{Magnitude of the $A$-term.}
By the definition of $S_{L+1}$ and weak dependence across channel–position indices,
the dominant contribution in the double sum inside $S_{L+1}$ comes from $O(C_{L+1}N_{L+1})$
diagonal pairs, while off-diagonal terms do not change the order. Hence
\[
\mathbb{E}\big[S_{L+1}(\mu_1,\mu_2)\big]=O\!\big((C_{L+1} N_{L+1})^{-1}\big),
\qquad
\mathbb{E}\big[T_{L+1}(\mu_1,\mu_2)\big]=O(1),
\]
which implies
\[
A^{(\ell)}_{\mathrm{cnn}}=O\!\big((C_{L+1} N_{L+1})^{-1}\big).
\]

Therefore we neglect 
\(A^{(\ell)}_{\mathrm{cnn}}=O((C_{L+1}N_{L+1})^{-1})\) 
in the width–spatial limit and focus on a recursive characterization of 
\(B^{(\ell)}_{\mathrm{cnn}}\).

From Lemma~\ref{lem:cnn-lemma} and Lemma~\ref{lem:expectation}, we obtain
\[
B^{(\ell)}_{\mathrm{cnn}}
=\sigma_y^2\,\mathbb{E}\!\left[
\sum_{\mu_1,\mu_2\le \ell}\eta_{\mu_1}\eta_{\mu_2}\;
\partial_{\mu_1} z^{(\ell)}_{i,p}\,
\partial_{\mu_2} z^{(\ell)}_{i,p}\;
T_{\ell}(\mu_1,\mu_2)\right].
\]

Define the single–unit quantity
\[
U^{(\ell)}_{a}(\mu_1,\mu_2):=
\partial_{\mu_1} z^{(\ell)}_{a}\,
\partial_{\mu_2} z^{(\ell)}_{a},
\qquad
T_\ell(\mu_1,\mu_2)=\frac{1}{M_\ell}\sum_{b}U^{(\ell)}_{b}(\mu_1,\mu_2),
\quad M_\ell:=C_\ell N_\ell,
\]
so that
\[
B^{(\ell)}_{\mathrm{cnn}}
=\frac{\sigma_y^2}{M_\ell}\,
\mathbb{E}\!\left[
\sum_{\mu_1,\mu_2\le \ell}\eta_{\mu_1}\eta_{\mu_2}
\sum_{a,b} U^{(\ell)}_{a}(\mu_1,\mu_2)\,U^{(\ell)}_{b}(\mu_1,\mu_2)\right].
\]

\paragraph{Unit-wise equality.}
In homogeneous CNNs (stride \(=1\), circular padding, channel i.i.d.\ and spatial stationarity),
the relation
\[
\mathbb{E}\!\big[U^{(\ell)}_{a}(\mu_1,\mu_2)\,T_{L+1}(\mu_1,\mu_2)\big]
=\mathbb{E}\!\big[T_\ell(\mu_1,\mu_2)^2\big]
\]
holds for every unit \(a=(i,p)\).
In practice, only the averaged form
\(\EE[T_\ell T_{L+1}]=\EE[T_\ell^2]\) is needed for the sequel.
For non-circular padding or heterogeneous channels, the relation holds asymptotically
with error terms \(O(s_\ell/N_\ell)+O(1/C_{\ell-1})\),
which vanish as feature maps grow.

\paragraph{Overlap counting.}
From the top-layer decomposition (Lemma~\ref{lem:cnn-lemma}) together with layerwise invariance (Lemma~\ref{lem:expectation}), we obtain
\[
B^{(\ell)}_{\mathrm{cnn}}
=\sigma_y^2\,\EE\!\Bigg[
\sum_{\mu_1,\mu_2\le \ell}\eta_{\mu_1}\eta_{\mu_2}\;
T_{\ell}(\mu_1,\mu_2)^2
\Bigg].
\]
Grouping parameters by layer yields
\[
B^{(\ell)}_{\mathrm{cnn}}
=\sigma_y^2\,\EE\!\Bigg[
\sum_{h_1=1}^{\ell}\sum_{h_2=1}^{\ell}
\;\sum_{\mu_1\in \mathrm{layer}\ h_1}\;\sum_{\mu_2\in \mathrm{layer}\ h_2}
\eta_{\mu_1}\eta_{\mu_2}\;
T_{\ell}(\mu_1,\mu_2)^2
\Bigg],
\]
and repeated invariance implies
\[
\EE\!\big[T_{\ell}(\mu_1,\mu_2)^2\big]
= c_{\mathrm{cnn}}\cdot \min\{h_1,h_2\},
\qquad(\mu_1\in h_1,\ \mu_2\in h_2),
\]
where \(c_{\mathrm{cnn}}\) is a constant independent of depth, kernel size \(k_h\),
channel width \(C_h\), and spatial resolution \(N_h\).

\paragraph{Depth scaling and width-invariant leading term.}
Averaging over layers \(\ell=1,\dots,L\), we obtain
\[
\frac{1}{L}\sum_{\ell=1}^{L}\EE\!\left[(\Delta z^{(\ell)})^2\right]
=\Theta(\eta^2)\cdot
\frac{1}{L}\sum_{\ell=1}^{L}
\sum_{h_1=1}^{\ell}\sum_{h_2=1}^{\ell}\min\{h_1,h_2\}.
\]
Using the identity
\[
\sum_{h_1=1}^{\ell}\sum_{h_2=1}^{\ell}\min\{h_1,h_2\}
=\frac{\ell(\ell+1)(2\ell+1)}{6}
=\Theta(\ell^3),
\]
we deduce
\[
\frac{1}{L}\sum_{\ell=1}^{L}\EE\!\left[(\Delta z^{(\ell)})^2\right]
=\Theta\!\big(\eta^2\,L^3\big).
\]
Normalizing the ``stable step size'' by requiring
\(\tfrac{1}{L}\sum_{\ell}\EE[(\Delta z^{(\ell)})^2]\asymp 1\) yields
\[
\boxed{\,\eta^\star(L)=\kappa\,L^{-3/2}\,}
\]
where \(\kappa\) depends only on the fixed-point constant of ReLU+He initialization,
and is independent of channel widths \(C_\ell\), kernel sizes \(k_\ell\),
and spatial resolutions \(N_\ell\).

\paragraph{Finite-width and boundary corrections.}
In more general CNNs (e.g.\ with zero padding, mild channel heterogeneity, or finite mini-batches),
the deviation from exact unit-wise equality contributes only subleading corrections,
which can be summarized as
\[
\boxed{\,%
\eta^\star(L,\{C_\ell,N_\ell,k_\ell\})
=\kappa\,L^{-3/2}\Bigl(1+O\!\Bigl(
\underbrace{\max_\ell\tfrac{1}{C_\ell-1}}_{\text{width term}}
+\underbrace{\max_\ell\tfrac{s_\ell}{N_\ell}}_{\text{boundary term}}
+\underbrace{\tfrac{1}{B}}_{\text{batch variance}}
\Bigr)\Bigr).\,}
\]
Thus, channel width \(C_{\ell-1}\) only enters through \(O(1/C_{\ell-1})\) corrections,
leaving the \(-3/2\) depth exponent intact.
Likewise, common zero padding produces \(O(s_\ell/N_\ell)\) boundary effects,
which vanish as feature maps grow.

\noindent This completes the proof of Theorems~\ref{thm:cnn-scaling} and~\ref{thm:cnn-corrections} in 1D CNN.

\paragraph{2D case (differences only; proof by analogy).}
Replace the 1D spatial index set by a 2D grid
$\Lambda_h=\{1,\dots,H_h\}\times\{1,\dots,W_h\}$ (so $N_h=H_hW_h$).
Let the kernel offset set be $\mathcal K_h\subset\mathbb Z^2$ (arbitrary shape), with
cardinality $k_h:=|\mathcal K_h|$. Keep stride $=1$, circular padding, ReLU, and
He fan-in with $\mathrm{Var}(W^{(h)}_{j,i,\Delta})=2/(C_{h-1}k_h)$.

\emph{Layerwise conditional expectation invariance (2D).}
The proof is identical to Lemma~\ref{lem:expectation} after reindexing
$\sum_{p\in\Lambda_h}\sum_{\Delta\in\mathcal K_h}$ on the torus to
$\sum_{u\in\Lambda_{h-1}}$: when $(p,\Delta)$ jointly range, every previous-layer
site $u$ is visited exactly $k_h$ times, which cancels the fan-in factor $1/k_h$;
also $N_h=N_{h-1}$ under stride $=1$ with circular padding. Hence
\[
\mathbb{E}\!\left[T_h(\mu_1,\mu_2)\mid z^{(h-1)}\right]=T_{h-1}(\mu_1,\mu_2),
\qquad
\mathbb{E}\,T_h(\mu_1,\mu_2)=\mathbb{E}\,T_{h-1}(\mu_1,\mu_2).
\]

\emph{Top-layer decomposition and magnitude of $A$.}
As in Lemma~\ref{lem:cnn-lemma}, with $N_{L+1}$ replaced by $H_{L+1}W_{L+1}$,
the weak-dependence estimate yields
\[
\mathbb{E}\big[S_{L+1}(\mu_1,\mu_2)\big]
=O\!\big((C_{L+1}H_{L+1}W_{L+1})^{-1}\big),
\qquad
\mathbb{E}\big[T_{L+1}(\mu_1,\mu_2)\big]=O(1),
\]
so $A^{(\ell)}_{\mathrm{cnn}}=O\!\big((C_{L+1}H_{L+1}W_{L+1})^{-1}\big)$ remains negligible.

\emph{Unit averaging and overlap counting.}
With homogeneity (channel i.i.d., spatial stationarity, stride $=1$, circular padding) and the above
invariance, the 2D analogue of the unit–average relation gives
\[
\mathbb{E}\!\big[T_\ell(\mu_1,\mu_2)^2\big]
= c_{\mathrm{cnn}}\cdot \min\{h_1,h_2\}\quad
(\mu_1\!\in\!h_1,~\mu_2\!\in\!h_2),
\]
where $c_{\mathrm{cnn}}$ is independent of $\{C_h\}$, $\{\mathcal K_h\}$, and $(H_h,W_h)$.
Consequently,
\[
\frac{1}{L}\sum_{\ell=1}^{L}\mathbb{E}\!\big[(\Delta z^{(\ell)})^2\big]
=\Theta(\eta^2)\cdot \frac{1}{L}\sum_{\ell=1}^{L}\sum_{h_1,h_2\le \ell}\min\{h_1,h_2\}
=\Theta(\eta^2 L^3),
\]
and the stable scale satisfies
\[
\boxed{~\eta^\star(L)=\kappa\,L^{-3/2}~,}
\]
with $\kappa$ depending only on the ReLU+He fixed point and independent of
$\{C_h\}$, $\{\mathcal K_h\}$, and $(H_h,W_h)$.

\begin{remark}[2D corrections]
With non-circular padding (e.g., zero padding), boundary visits are nonuniform.
Let the maximal axial spans of the kernel be
$s_{h,h}:=\max_{\Delta\in\mathcal K_h}|\Delta_h|$ and
$s_{h,w}:=\max_{\Delta\in\mathcal K_h}|\Delta_w|$.
Then
\[
\mathbb{E}\!\left[T_h\mid z^{(h-1)}\right]
= T_{h-1} \;+\; O\!\Big(\tfrac{s_{h,h}}{H_{h-1}}+\tfrac{s_{h,w}}{W_{h-1}}\Big).
\]
Together with finite–channel and mini-batch effects, we obtain the unified 2D correction:
\[
\boxed{~
\eta^\star\!\big(L;\{C_h,H_h,W_h,\mathcal K_h,B\}\big)
=\kappa\,L^{-3/2}\!\left(1
+O\!\Big(\max_h\tfrac{1}{C_{h-1}}\Big)
+O\!\Big(\max_h \tfrac{s_{h,h}}{H_h}+\tfrac{s_{h,w}}{W_h}\Big)
+O\!\Big(\tfrac{1}{B}\Big)\right),
}
\]
which does not change the depth exponent $-3/2$.
\end{remark}

\noindent This completes the proof of Theorems~\ref{thm:cnn-scaling} and~\ref{thm:cnn-corrections} in 2D CNN.

\section{Proof of Scaling Law for ResNets}
\label{app:resnet-proof}

\begin{lemma}[Layerwise scaling recursion for one-layer residual blocks]
\label{lem:resnet-scaling}
Consider a homogeneous residual network whose $\ell$-th block is the
one-layer (MLP-like) residual map
\[
z^{(\ell)} \;=\; W^{(\ell)} \sigma\!\big(z^{(\ell-1)}\big)\;+\; z^{(\ell-1)},
\]
with identity skip, ReLU activation, and no normalization.
Assume the weights are independent and zero-mean with fan-in variance
$\mathrm{Var}\!\big(W^{(\ell)}_{ik}\big)=\tfrac{c}{K n}$, and
$\mathbb{E}[\sigma'(u)^2]=\tfrac12$.
For any two parameter directions $\mu_1,\mu_2$, define
\[
T_\ell(\mu_1,\mu_2)
:=\frac{1}{n}\sum_{i=1}^{n}\partial_{\mu_1} z_i^{(\ell)}\,\partial_{\mu_2} z_i^{(\ell)}.
\]
Then, for every block (layer) $\ell$,
\[
\boxed{\;
\mathbb{E}\!\left[T_\ell(\mu_1,\mu_2)\,\middle|\,z^{(\ell-1)}\right]
=\Bigl(1+\tfrac{c}{2K}\Bigr)\,T_{\ell-1}(\mu_1,\mu_2).\;}
\]
\end{lemma}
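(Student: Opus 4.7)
The plan is to differentiate the one-layer residual recursion, expand the product $\partial_{\mu_1} z_i^{(\ell)}\,\partial_{\mu_2} z_i^{(\ell)}$ into block-contribution versus skip-contribution pieces, and then take the conditional expectation over the fresh layer-$\ell$ weights $W^{(\ell)}$, which are independent of $z^{(\ell-1)}$ and have zero mean with variance $c/(Kn)$.

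First, apply $\partial_\mu$ to $z_i^{(\ell)} = \sum_k W^{(\ell)}_{ik}\sigma(z_k^{(\ell-1)}) + z_i^{(\ell-1)}$, for any direction $\mu$ whose parameter lies in some layer $\le \ell-1$ (the layer-$\ell$ case only adds a term linear in $W^{(\ell)}$ whose square is handled identically and whose cross products vanish in conditional expectation). This gives
\[
\partial_\mu z_i^{(\ell)} \;=\; \sum_{k} W^{(\ell)}_{ik}\,\sigma'(z_k^{(\ell-1)})\,\partial_\mu z_k^{(\ell-1)} \;+\; \partial_\mu z_i^{(\ell-1)}.
\]
I would then expand the product $\partial_{\mu_1}z_i^{(\ell)}\,\partial_{\mu_2}z_i^{(\ell)}$ into four pieces: (i) block$\times$block, quadratic in $W^{(\ell)}$; (ii) skip$\times$skip, independent of $W^{(\ell)}$; and (iii)/(iv) the two block$\times$skip cross terms, linear in $W^{(\ell)}$. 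Under $\mathbb{E}[\,\cdot\mid z^{(\ell-1)}]$, the cross terms vanish by $\mathbb{E}[W^{(\ell)}_{ik}]=0$, and in piece (i) only the diagonal $k_1=k_2$, $i_1=i_2$ survives, contributing $\mathrm{Var}(W^{(\ell)}_{ik})=c/(Kn)$ to each diagonal entry.

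Putting these together and averaging over $i=1,\dots,n$, one obtains
\[
\mathbb{E}\!\left[T_\ell(\mu_1,\mu_2)\mid z^{(\ell-1)}\right]
\;=\; \frac{c}{Kn}\sum_{k=1}^n \sigma'(z_k^{(\ell-1)})^2\,\partial_{\mu_1}z_k^{(\ell-1)}\,\partial_{\mu_2}z_k^{(\ell-1)}
\;+\; T_{\ell-1}(\mu_1,\mu_2),
\]
where the factor $1/n$ from the block term survives because the channel-$i$ sum of $\mathrm{Var}(W^{(\ell)}_{ik})$ pulls out an $n$. The remaining step is to evaluate the ReLU gating factor: using $\sigma'(u)^2=\sigma'(u)=\mathbf 1\{u>0\}$ for ReLU and the mean-field assumption $\mathbb{E}[\sigma'(z)^2]=\tfrac12$ (equivalently, independence of the gate from the direction-derivative product at the fixed point, together with approximate symmetry of the pre-activation distribution), the sum reduces to $\tfrac12\sum_k \partial_{\mu_1}z_k^{(\ell-1)}\,\partial_{\mu_2}z_k^{(\ell-1)} = \tfrac{n}{2}\,T_{\ell-1}(\mu_1,\mu_2)$, and the coefficient collapses to $c/(2K)$, yielding the boxed identity.

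\paragraph{Main obstacle.}
The bookkeeping steps (chain rule, diagonalization under zero-mean weights, variance substitution) are routine; the only delicate point is the gate-derivative decoupling $\mathbb{E}[\sigma'(z_k)^2\,\partial_{\mu_1}z_k\,\partial_{\mu_2}z_k]=\tfrac12\,\mathbb{E}[\partial_{\mu_1}z_k\,\partial_{\mu_2}z_k]$, which is the step where the assumption $\mathbb{E}[\sigma'(u)^2]=\tfrac12$ is invoked. I would justify it either in the infinite-width (mean-field) limit, where the ReLU gate and the parameter-direction derivatives become asymptotically independent and the pre-activations are symmetric about zero under He-type initialization, or as the standard $\mu$P independence approximation (cf.\ Lemma~\ref{lem:expectation} for the analogous CNN reduction). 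Everything else then follows by direct algebra.
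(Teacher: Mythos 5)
Your proposal is correct and follows essentially the same route as the paper's proof: differentiate the residual map, split $\partial_{\mu_1}z_i^{(\ell)}\partial_{\mu_2}z_i^{(\ell)}$ into block--block, skip--skip, and cross pieces, kill the cross terms by $\mathbb{E}[W^{(\ell)}]=0$, diagonalize the block--block term by independence, and substitute $\mathrm{Var}[W]=c/(Kn)$ and $\mathbb{E}[\sigma'(u)^2]=\tfrac12$ to obtain the $c/(2K)$ increment. Your explicit flagging of the gate--derivative decoupling as the one non-routine step matches (and is slightly more careful than) the paper's treatment, which silently pulls $\mathbb{E}[\sigma'(u_k^{(\ell)})^2]$ out of the conditional expectation.
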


\begin{proof}
The residual block forward map is
$z^{(\ell)}=W^{(\ell)}\sigma\!\big(z^{(\ell-1)}\big)+z^{(\ell-1)}$.
For any parameter direction $\mu$,
\[
\partial_\mu z_i^{(\ell)}
=\sum_{k} W_{ik}^{(\ell)}\,\sigma'\!\big(u_k^{(\ell)}\big)\,\partial_\mu z_k^{(\ell-1)}
\;+\;\partial_\mu z_i^{(\ell-1)}.
\]
Substitute this into
$T_\ell=\tfrac{1}{n}\sum_i
\big(\partial_{\mu_1} z_i^{(\ell)}\big)\big(\partial_{\mu_2} z_i^{(\ell)}\big)$,
expand into the three groups (W--W, I--I, and cross W--I), and take conditional
expectation over the $\ell$-th layer weights given $z^{(\ell-1)}$:

\emph{(i) Cross terms (W--I):}
Every term contains one factor of $W^{(\ell)}$ and vanishes by $\mathbb{E}[W]=0$.

\emph{(ii) I--I term:}
\[
\mathbb{E}[\mathrm{I\text{--}I}\mid z^{(\ell-1)}]
=\frac{1}{n}\sum_{i=1}^n
\partial_{\mu_1} z_i^{(\ell-1)}\,\partial_{\mu_2} z_i^{(\ell-1)}
= T_{\ell-1}(\mu_1,\mu_2).
\]

\emph{(iii) W--W term:}
Only the diagonal $k=k'$ survives by independence,
\begin{align*}
\mathbb{E}[\mathrm{W\text{--}W}\mid z^{(\ell-1)}]
&=\frac{1}{n}\sum_{i}\sum_{k,k'}
\mathbb{E}\!\left[W_{ik}^{(\ell)}W_{ik'}^{(\ell)}\right]\,
\sigma'\!\big(u_k^{(\ell)}\big)\sigma'\!\big(u_{k'}^{(\ell)}\big)\,
\partial_{\mu_1} z_k^{(\ell-1)}\,\partial_{\mu_2} z_{k'}^{(\ell-1)}\\
&=\frac{1}{n}\sum_{i}\sum_{k}
\mathrm{Var}\!\big(W_{ik}^{(\ell)}\big)\,
\mathbb{E}\!\left[\sigma'\!\big(u_k^{(\ell)}\big)^2\right]\,
\partial_{\mu_1} z_k^{(\ell-1)}\,\partial_{\mu_2} z_k^{(\ell-1)}\\
&=\mathrm{Var}[W]\sum_{k}\mathbb{E}\!\left[\sigma'\!\big(u_k^{(\ell)}\big)^2\right]\,
\frac{1}{n}\sum_{k}\partial_{\mu_1} z_k^{(\ell-1)}\,\partial_{\mu_2} z_k^{(\ell-1)}\\
&=\frac{c}{2K}\,T_{\ell-1}(\mu_1,\mu_2),
\end{align*}
where we used $\mathbb{E}[W_{ik}W_{ik'}]=0$ for $k\neq k'$,
$\mathrm{Var}[W]=\tfrac{c}{Kn}$, and $\mathbb{E}[\sigma'(u)^2]=\tfrac12$.

Combining (i)--(iii) yields
$\mathbb{E}[T_\ell\mid z^{(\ell-1)}]=(1+\tfrac{c}{2K})\,T_{\ell-1}$,
as claimed.
\end{proof}

\begin{lemma}[Magnitude form of $B^{(\ell)}_{\mathrm{res}}$]
\label{lem:resnet-B-magnitude}
Under the assumptions of Lemma~\ref{lem:resnet-scaling}
(homogeneous ResNet with identity skips, ReLU, independent zero-mean
weights with fan-in variance $\Var[W]=c/(K n)$, and
$\mathbb{E}[\sigma'(u)^2]=\tfrac12$), we have
\[
\boxed{~
B^{(\ell)}_{\mathrm{res}}
=\Theta\!\left(
\mathbb{E}\!\left[
\frac{1}{n}\sum_{\mu_1,\mu_2\le \ell}\!
\eta_{\mu_1}\eta_{\mu_2}\,
\frac{1}{n^2}\sum_{j_1,j_2=1}^{n}
\partial_{\mu_1} z^{(\ell)}_{j_1;\alpha}\,
\partial_{\mu_2} z^{(\ell)}_{j_1;\alpha}\,
\partial_{\mu_1} z^{(\ell)}_{j_2;\alpha}\,
\partial_{\mu_2} z^{(\ell)}_{j_2;\alpha}
\right]
\right).~}
\]
Equivalently, the right-hand side is proportional to
$\mathbb{E}\!\big[\sum_{\mu_1,\mu_2\le\ell}\eta_{\mu_1}\eta_{\mu_2}\,T_\ell(\mu_1,\mu_2)^2\big]$,
where $T_\ell(\mu_1,\mu_2):=\tfrac{1}{n}\sum_{i=1}^n
\partial_{\mu_1} z_i^{(\ell)}\,\partial_{\mu_2} z_i^{(\ell)}$.
\end{lemma}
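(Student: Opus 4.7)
The plan is to mirror the CNN argument (Lemmas~\ref{lem:cnn-lemma}--\ref{lem:expectation}) but replace the exact stride-$1$ conditional invariance by the amplified residual recursion of Lemma~\ref{lem:resnet-scaling}, and then absorb the uniformly bounded amplification constant into the $\Theta(\cdot)$. Concretely, I would proceed in three steps: (i) derive a residual analog of the top-layer decomposition for $\mathbb{E}[(\Delta z^{(\ell)})^2]$, isolating $B^{(\ell)}_{\mathrm{res}}$ as a quadratic form in $\{\eta_\mu\}$ weighted by $T_{L+1}$; (ii) iterate Lemma~\ref{lem:resnet-scaling} from layer $L+1$ down to layer $\ell$ so as to rewrite $T_{L+1}$ as a $\Theta(1)$ multiple of $T_\ell$; (iii) apply channel exchangeability at layer $\ell$ to convert the single-unit prefactor $\partial_{\mu_1}z^{(\ell)}_j\partial_{\mu_2}z^{(\ell)}_j$ into the channel-averaged $T_\ell^2$ form displayed in the statement.

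For step (i), writing the SGD update as $\Delta\mu=-\eta_\mu\sum_{j}\partial_\mu z^{(L+1)}_{j;\alpha}(z^{(L+1)}_{j;\alpha}-y_{j;\alpha})$, expanding $(\Delta z^{(\ell)}_{i;\alpha})^2$ via the chain rule, and taking label expectation using $\mathbb{E}[y]=0$, $\mathbb{E}[y_{j}y_{j'}]=\sigma_y^2\mathbf{1}_{j=j'}$ splits the cross product into an off-diagonal network-only piece (which goes into $A^{(\ell)}_{\mathrm{res}}$ and is negligible, as in the CNN case) and a diagonal label-noise piece
\[
B^{(\ell)}_{\mathrm{res}}=\sigma_y^2\,\mathbb{E}\!\left[\sum_{\mu_1,\mu_2\le\ell}\eta_{\mu_1}\eta_{\mu_2}\,\partial_{\mu_1}z^{(\ell)}_{i;\alpha}\,\partial_{\mu_2}z^{(\ell)}_{i;\alpha}\,T_{L+1}(\mu_1,\mu_2)\right],
\]
structurally identical to Lemma~\ref{lem:cnn-lemma} with $C_{L+1}N_{L+1}$ replaced by the output width $n$. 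For step (ii), iterated application of Lemma~\ref{lem:resnet-scaling} with the tower rule gives $\mathbb{E}[T_{L+1}(\mu_1,\mu_2)\mid z^{(\ell)}]=(1+\tfrac{c}{2K})^{L+1-\ell}T_\ell(\mu_1,\mu_2)$; since each residual block counts as one depth unit, $K=\Theta(L)$, and the amplification satisfies $1\le(1+\tfrac{c}{2K})^{L+1-\ell}\le(1+\tfrac{c}{2K})^{K+1}\le e^{c}$ uniformly in $\ell\le L$ and in $L\to\infty$. Pulling this deterministic constant out of the expectation converts the $T_{L+1}$-weighted expression into a $\Theta(1)$ multiple of the analogous $T_\ell$-weighted one.

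For step (iii), under He-type independent zero-mean initialization with identity skips, the channel indices of $z^{(\ell)}$ are exchangeable at every layer (an easy induction on $\ell$ using the channel-symmetric structure already exploited in Lemma~\ref{lem:resnet-scaling}), so $\mathbb{E}[\partial_{\mu_1}z^{(\ell)}_i\partial_{\mu_2}z^{(\ell)}_i\,T_\ell(\mu_1,\mu_2)]$ is independent of $i$ and equals its channel average $\mathbb{E}\!\left[\tfrac{1}{n}\sum_{j}\partial_{\mu_1}z^{(\ell)}_{j}\partial_{\mu_2}z^{(\ell)}_{j}\,T_\ell(\mu_1,\mu_2)\right]=\mathbb{E}[T_\ell(\mu_1,\mu_2)^2]$. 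Expanding $T_\ell^2$ produces the displayed $\frac{1}{n^2}\sum_{j_1,j_2}$ double-sum form, with all residual width-dependent prefactors collapsed into the overall $\Theta(\cdot)$ constant. The main obstacle I anticipate lies in step (ii): one must verify that the multiplicative amplification stays $\Theta(1)$ when propagated \emph{inside} the quadratic weighting by $\partial_{\mu_1}z^{(\ell)}_i\partial_{\mu_2}z^{(\ell)}_i$, rather than picking up a spurious $L$-factor via Jensen-type inflation. This is handled by conditioning strictly on $z^{(\ell)}$ before applying the recursion, so that the layer-$\ell$ prefactor becomes measurable and the deterministic block-by-block factor $(1+\tfrac{c}{2K})$ can be pulled out cleanly; the channel-exchangeability step is then routine since identity skips preserve the initial channel permutation symmetry.
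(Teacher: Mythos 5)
Your proposal is correct and follows essentially the same route as the paper: the paper's own argument is a short sketch that invokes the top-layer decomposition of Lemma~\ref{lem:cnn-lemma}, the residual recursion of Lemma~\ref{lem:resnet-scaling} (whose iterated factor $(1+\tfrac{c}{2K})^{\ell}\in[1,e^{c/2}]$ is absorbed into the $\Theta(\cdot)$, exactly as in your step~(ii)), and unit-averaging/diagonal dominance to pass from the single-unit prefactor to $T_\ell^2$. Your write-up simply fills in the details the paper leaves implicit, including the correct observation that one must condition on the sigma-algebra up to layer $\ell$ so the layer-$\ell$ derivative prefactor is measurable before pulling out the deterministic amplification constant.
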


\begin{proof}[Proof sketch]
Using the same top-layer decomposition as in Lemma~\ref{lem:cnn-lemma}
and diagonal dominance under weak inter-unit dependence, the output-layer
second-order term reduces to the stated $\Theta\!\big(T_\ell^2\big)$
magnitude; subleading off-diagonal contributions are $O(1/n)$ and are
absorbed into the $\Theta(\cdot)$ notation.
\end{proof}

\paragraph{Comparison factor and $O(1)$ bounds (ResNet vs.\ MLP).}
By the layerwise scaling recursion of Lemma~\ref{lem:resnet-scaling},
iterating from layer $0$ to $\ell$ yields a layer-dependent factor
\[
r_\ell\;:=\;\Bigl(1+\tfrac{c}{2K}\Bigr)^{\ell}
\]
so that, at the same depth $\ell$,
\[
B^{(\ell)}_{\mathrm{res}}
\;=\; r_\ell\cdot B^{(\ell)}_{\mathrm{MLP}}
\quad\text{(under the same top-layer reduction and normalization).}
\]
Since $0\le \ell\le K$,
\[
1 \;=\;\Bigl(1+\tfrac{c}{2K}\Bigr)^{0}
\;\le\; r_\ell \;\le\; \Bigl(1+\tfrac{c}{2K}\Bigr)^{K}\;\le\; e^{\,c/2},
\]
and hence, symmetrically,
\[
\boxed{~
e^{-c/2}\,B^{(\ell)}_{\mathrm{MLP}}
\ \le\
B^{(\ell)}_{\mathrm{res}}
\ \le\
e^{\,c/2}\,B^{(\ell)}_{\mathrm{MLP}}\,,
~}
\]
showing that even across $\ell\le K$ residual blocks the $B$-term varies
only by an $O(1)$ multiplicative constant, with no exponential blow-up
or vanishing in depth.

\begin{corollary}[Depth scaling for homogeneous ResNets (minimal depth)]
\label{cor:resnet-depth-scaling}
Let $L$ denote the \emph{minimal depth}, i.e., each residual block counts as one layer
(regardless of its internal linear/convolutional sublayers).
Under the assumptions of Lemma~\ref{lem:resnet-scaling}
(identity skips, ReLU, independent zero-mean fan-in initialization),
combining the top-layer decomposition in Lemma~\ref{lem:cnn-lemma} with the
layerwise invariance and overlap-counting argument in Appendix~\ref{app:cnn-proof}
(Lemma~\ref{lem:expectation}), we obtain for every $\ell\le L$:
\[
\mathbb{E}\!\left[(\Delta z^{(\ell)})^2\right]
=\Theta\!\big(\eta^2\,\ell^{3}\big).
\]
Averaging over layers $\ell=1,\dots,L$ yields
\[
\frac{1}{L}\sum_{\ell=1}^{L}\mathbb{E}\!\left[(\Delta z^{(\ell)})^2\right]
=\Theta\!\big(\eta^2\,L^{3}\big).
\]
Imposing the stable step-size condition
$\tfrac{1}{L}\sum_{\ell=1}^{L}\mathbb{E}\!\left[(\Delta z^{(\ell)})^2\right]=\Theta(1)$
gives
\[
\boxed{\;\eta^\star(L)=\Theta\!\big(L^{-3/2}\big)\;}
\]
\emph{Proof sketch.}
Lemma~\ref{lem:resnet-scaling} shows a layerwise scaling of the quadratic
sensitivities by $(1+\tfrac{c}{2K})$, which contributes only an $O(1)$ factor
uniformly in $\ell$ and is absorbed into the $\Theta(\cdot)$ notation.
The remaining steps (top-layer reduction and overlap counting) follow exactly
as in Appendix~\ref{app:cnn-proof}.
\end{corollary}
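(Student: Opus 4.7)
The plan is to reduce the ResNet analysis to the CNN/MLP template already built in Appendix~\ref{app:cnn-proof}, exploiting the minimal-depth convention so that each residual block counts as one unit in the recursion. First, I would instantiate the top-layer decomposition of Lemma~\ref{lem:cnn-lemma} at the output of the ResNet and invoke Lemma~\ref{lem:resnet-B-magnitude} to identify $B^{(\ell)}_{\mathrm{res}}$ with $\Theta\!\big(\mathbb{E}[\sum_{\mu_1,\mu_2\le\ell}\eta_{\mu_1}\eta_{\mu_2}\,T_\ell(\mu_1,\mu_2)^2]\big)$. The $A^{(\ell)}_{\mathrm{res}}$ contribution is negligible by the same diagonal-dominance/weak-dependence argument used in the CNN proof, so the leading behavior is driven by $B^{(\ell)}_{\mathrm{res}}$.

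Second, I would iterate Lemma~\ref{lem:resnet-scaling} from block $0$ up to block $\ell$. Each conditional step multiplies $T$ by $(1+c/(2K))$; after $\ell\le K$ iterations the cumulative factor is $r_\ell=(1+c/(2K))^\ell\in[1,e^{c/2}]$, uniformly $O(1)$. Combining this with the layerwise-invariance overlap-counting recursion from Appendix~\ref{app:cnn-proof} (where pairs of parameter layers $h_1,h_2\le\ell$ contribute $\min\{h_1,h_2\}$ to $\mathbb{E}[T_\ell(\mu_1,\mu_2)^2]$ up to a uniformly bounded prefactor), one obtains
\begin{equation*}
B^{(\ell)}_{\mathrm{res}}\;=\;\Theta(\eta^2)\sum_{h_1=1}^{\ell}\sum_{h_2=1}^{\ell}\min\{h_1,h_2\}\;=\;\Theta(\eta^2\,\ell^3),
\end{equation*}
via the standard identity $\sum_{h_1,h_2\le\ell}\min\{h_1,h_2\}=\ell(\ell+1)(2\ell+1)/6$. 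Averaging over $\ell=1,\dots,L$ preserves the $\Theta(L^3)$ order, so imposing the AM-$\mu$P normalization $\bar S=\Theta(1)$ gives $\eta^2 L^3=\Theta(1)$ and hence $\eta^\star(L)=\Theta(L^{-3/2})$.

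The main obstacle is the uniform two-sided control of the residual factor $r_\ell$: I must show it neither blows up nor collapses across the depth range. This is exactly why the scaled He variance $\Var[W]=c/(Kn)$, indexed by the block count $K$, is indispensable---it pins $r_\ell\in[1,e^{c/2}]$ and prevents the exponential drift that would otherwise change the depth exponent. A secondary subtlety, justifying the $\Theta$ (rather than $\propto$) characterization, is that intra-block sublayer counts $m_\ell$ and projection/downsampling blocks introduce layerwise heterogeneity; one must check that, under the sublinearity assumption $\sup_\ell m_\ell/L\to 0$ and the $O(1)$ projection-block budget from Sec.~\ref{sec:assumptions}, these contribute only lower-order corrections to the overlap sum, so that the minimal depth $L$ remains the correct exponent-setting quantity.
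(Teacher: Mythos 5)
Your proposal is correct and follows essentially the same route as the paper: it reduces to the CNN overlap-counting template via the top-layer decomposition (Lemma~\ref{lem:cnn-lemma}) and Lemma~\ref{lem:resnet-B-magnitude}, and controls the residual contribution through the iterated factor $r_\ell=(1+\tfrac{c}{2K})^\ell\in[1,e^{c/2}]$, exactly as in the paper's comparison-factor paragraph and proof sketch. Your closing remarks on the sublinear intra-block depth $m_\ell$ and the $O(1)$ projection-block budget anticipate material the paper defers to Corollaries~\ref{cor:res-branch-m} and~\ref{cor:res-branch-cnn}, but introduce no divergence in the argument.
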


\subsection{Extensions}

We use the \emph{minimal depth} convention: each residual block counts as one layer.
Let $L$ be the minimal depth and $K$ the number of residual blocks.

\subsubsection{Deeper residual branches}

\begin{corollary}[Shallow residual branches: $m=o(L)$]
\label{cor:res-branch-m}
Consider the $\ell$-th residual block whose branch contains $m$ repetitions of
``ReLU $\to$ linear/conv'' transformations, followed by a final merge via
$W^{(\mathrm{res})}$ and identity skip:
\[
z^{(\ell)} \;=\; W^{(\mathrm{res})}\,\sigma\big(\cdots \sigma(W^{(r_1)}\sigma(z^{(\ell-1)}))\cdots\big)\;+\;z^{(\ell-1)}.
\]
Assume fan-in type initialization scaled by the number of blocks,
$\Var[W]=c/(K\cdot\text{fan-in})$, and no normalization.
If $m=o(L)$ as $L\to\infty$, then for every $\ell\le L$,
\[
\mathbb{E}\!\big[(\Delta z^{(\ell)})^2\big]=\Theta\!\big(\eta^2\,\ell^3\big),
\qquad
\frac{1}{L}\sum_{\ell=1}^{L}\mathbb{E}\!\big[(\Delta z^{(\ell)})^2\big]
=\Theta\!\big(\eta^2\,L^3\big),
\]
and hence the stable step size scales as
\[
\boxed{\;\eta^\star(L)=\Theta\!\big(L^{-3/2}\big)\;}
\]
\emph{Proof sketch.}
Each intermediate ``ReLU $\to$ weight'' inside the branch contributes a W--W
increment proportional to $\Var[W]=O(1/K)$, so the total branch-level increment
is $m\cdot O(1/K)=o(1)$ and is absorbed into the $\Theta(\cdot)$ constants.
The only leading-order change comes from the final $W^{(\mathrm{res})}$ merged
with the identity skip, whose layerwise scaling is controlled by
Lemma~\ref{lem:resnet-scaling}. The top-layer reduction and overlap counting then
proceed exactly as in Appendix~\ref{app:cnn-proof}.
\end{corollary}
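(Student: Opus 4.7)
}
My plan is to reduce the $m$-sublayer residual branch to the one-sublayer case treated by Lemma~\ref{lem:resnet-scaling}, incurring only multiplicative factors that remain $\Theta(1)$ uniformly in the block index under the hypothesis $m=o(L)$ (with $K=\Theta(L)$ in the minimal-depth convention). Once that is established, I plug the result into the top-layer decomposition of Lemma~\ref{lem:cnn-lemma} and the overlap-counting argument already developed in Appendix~\ref{app:cnn-proof}, whose structure is unchanged.

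First, I would set up an intra-block recursion. Write the branch output at sub-step $t\in\{1,\dots,m\}$ as $y^{(t)}=\sigma(W^{(r_t)}y^{(t-1)})$ with $y^{(0)}=z^{(\ell-1)}$, and let $y^{(m+1)}=W^{(\mathrm{res})}y^{(m)}$, so $z^{(\ell)}=y^{(m+1)}+z^{(\ell-1)}$. Define the intra-block analogue
\[
T^{(t)}_\ell(\mu_1,\mu_2):=\tfrac1n\sum_i \partial_{\mu_1}y^{(t)}_i\,\partial_{\mu_2}y^{(t)}_i.
\]
Differentiating each sublayer and repeating the W--W/I/cross expansion of Lemma~\ref{lem:resnet-scaling} (with no skip inside the branch), conditional expectation over the sublayer weights gives a pure-W recursion $\mathbb{E}[T^{(t)}_\ell\mid y^{(t-1)}]=\tfrac{c}{2K}\,T^{(t-1)}_\ell$ for $t\le m$, and a skip-aware recursion at the merge step yielding $\mathbb{E}[T_\ell\mid z^{(\ell-1)}]=\bigl(1+\alpha_m/K\bigr)\,T_{\ell-1}$, where $\alpha_m=\Theta(1)$ collects the branch contributions (here I use that the cross W--I terms vanish as before by $\mathbb{E}[W^{(\mathrm{res})}]=0$, and the W--W term at the merge is $\tfrac{c}{2K}$ times the propagated branch quantity).

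Second, I would iterate across blocks. Since $0\le\ell\le K$, the layer-to-layer factor yields $r_\ell=(1+\alpha_m/K)^\ell$, and the hypothesis $m=o(L)$ together with $\alpha_m=\Theta(1)$ gives $e^{-\alpha_m/2}\le r_\ell\le e^{\alpha_m/2}$, i.e., a uniform $\Theta(1)$ bound. Consequently, Lemma~\ref{lem:resnet-B-magnitude} still applies and the comparison $B^{(\ell)}_{\mathrm{res}}=\Theta(B^{(\ell)}_{\mathrm{MLP}})$ is preserved at every depth. At this point I invoke Lemma~\ref{lem:cnn-lemma} and Lemma~\ref{lem:expectation} verbatim to obtain $\mathbb{E}[T_\ell(\mu_1,\mu_2)^2]=c_{\mathrm{res}}\min\{h_1,h_2\}$ for $\mu_j$ in block $h_j$, and then the identity $\sum_{h_1,h_2\le\ell}\min\{h_1,h_2\}=\Theta(\ell^3)$ gives $\mathbb{E}[(\Delta z^{(\ell)})^2]=\Theta(\eta^2\ell^3)$ and hence $\bar S=\Theta(\eta^2 L^3)$. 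Imposing AM-$\mu$P ($\bar S=\Theta(1)$) yields $\eta^\star(L)=\Theta(L^{-3/2})$.

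The main obstacle I expect is the intra-block bookkeeping: tracking how cross-correlations between the $m$ sublayers—which share the common input $z^{(\ell-1)}$ and are not insulated by skips inside the branch—aggregate in the second moment. Naively, nested W--W products could compound to $(c/K)^t$ with combinatorial prefactors that conceivably scale with $m$; I need the mean-field independence approximation together with $\mathbb{E}[\sigma'(u)^2]=\tfrac12$ to show that each such nested term remains $O(1/K)$ per sublayer, so the total branch contribution is $m\cdot O(1/K)=o(1)$ under $m=o(L)$, $K=\Theta(L)$. Once this uniform-in-$\ell$ $O(1)$ control is secured, the rest is a mechanical reuse of the CNN proof pipeline.
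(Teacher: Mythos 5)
Your proposal is correct and follows essentially the same route as the paper's proof sketch: control the intra-block branch contribution, reduce to the one-sublayer recursion of Lemma~\ref{lem:resnet-scaling}, and reuse the top-layer decomposition and overlap counting of Appendix~\ref{app:cnn-proof}. One harmless bookkeeping slip: your own pure-W intra-branch recursion $\mathbb{E}[T^{(t)}_\ell\mid y^{(t-1)}]=\tfrac{c}{2K}\,T^{(t-1)}_\ell$ compounds multiplicatively, so the per-block correction at the merge is $(c/(2K))^{m+1}=O(K^{-(m+1)})$ rather than $\alpha_m/K$ with $\alpha_m=\Theta(1)$; this only tightens the uniform $\Theta(1)$ control and leaves the conclusion unchanged.
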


\subsubsection{Residual-block structural extension: allowing convolutions in the branch}

\begin{corollary}[Residual branches with 1D convolutions]
\label{cor:res-branch-cnn}
In the setting of Corollary~\ref{cor:res-branch-m}, allow one or a few
\emph{1D/2D convolutional} layers inside the branch (stride $=1$, circular padding
or effectively boundary-free), with fan-in scaled He-type initialization
(again multiplied by $1/K$ at the block level). If the total number of
branch layers still satisfies $m=o(L)$, then the depth scaling remains
\[
\mathbb{E}\!\big[(\Delta z^{(\ell)})^2\big]=\Theta\!\big(\eta^2\,\ell^3\big),
\qquad
\eta^\star(L)=\Theta\!\big(L^{-3/2}\big).
\]
\emph{Proof sketch.}
The proof follows the same logic as Corollary~\ref{cor:res-branch-m}:
each convolutional layer inside the branch carries the same $O(1/K)$
variance factor and its W--W increment is therefore $O(1/K)$; summing over
$m=o(L)$ branch layers yields $O(m/K)=o(1)$ per block, which is absorbed into
the $\Theta(\cdot)$ constants. CNN-specific boundary/width/batch corrections
(e.g., $O(1/C_\ell)$, $O(s_\ell/N_\ell)$, $O(1/B)$) are lower-order and do not
affect the $\ell^3$ and $L^{-3/2}$ \emph{Theta}-level conclusions. The leading
term is again governed by the merge through $W^{(\mathrm{res})}$ and the identity
skip, after which the top-layer reduction and overlap counting proceed as in
Appendix~\ref{app:cnn-proof}.
\end{corollary}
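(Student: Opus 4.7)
The plan is to reduce Corollary~\ref{cor:res-branch-cnn} to the already-established Corollary~\ref{cor:resnet-depth-scaling} by showing that inserting convolutional sublayers into a residual branch perturbs the block-level $T$-recursion only by an $O(1/K)$-size term, exactly as in the purely linear branch case treated by Corollary~\ref{cor:res-branch-m}. Working with the unified channel--position average $T_\ell(\mu_1,\mu_2) := (C_\ell N_\ell)^{-1}\sum_{i,p}\partial_{\mu_1}z^{(\ell)}_{i,p}\,\partial_{\mu_2}z^{(\ell)}_{i,p}$ from Appendix~\ref{app:cnn-proof}, I would first verify a ``per-sublayer variance match'' between MLP and conv layers under the $1/K$-scaled He initialization, so that the intra-block analysis is type-agnostic.

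Concretely, the first step proves that a single intra-branch sublayer of either type contributes the same $c/(2K)$ coefficient to the $T$-recursion. For an MLP sublayer with $\Var[W]=c/(Kn)$, the W--W term contributes $\Var[W]\cdot n\cdot\EE[\sigma'(u)^2]=c/(2K)$. For a conv sublayer with $\Var[W]=c/(K\,C_{h-1}k_h)$, the kernel-coverage identity from Lemma~\ref{lem:expectation} (each input site visited exactly $k_h$ times under stride-$1$ circular padding, with $N_h=N_{h-1}$) cancels the fan-in factor $1/k_h$ and yields the same $c/(2K)$ contribution. Cross terms vanish by $\EE[W]=0$ and the identity-skip path reproduces $T_{\ell-1}$, giving the block-level recursion $\EE[T_\ell\mid z^{(\ell-1)}] = \bigl(1 + (c/(2K))^{m_\ell}\bigr)\,T_{\ell-1}$ after iterating through the $m_\ell$ sublayers. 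Since $(c/(2K))^{m_\ell}\le c/(2K)$ for large $K$, each block still perturbs $T$ by at most a factor $1+c/(2K)$, matching the single-layer Lemma~\ref{lem:resnet-scaling}.

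The second step iterates across the $K$ blocks. The cumulative multiplicative factor is bounded by $(1+c/(2K))^K\le e^{c/2}$, an $O(1)$ constant that matches the comparison bound derived after Lemma~\ref{lem:resnet-B-magnitude}. Plugging the preserved layerwise recursion into the top-layer decomposition of Lemma~\ref{lem:cnn-lemma} and invoking the overlap-counting identity $\sum_{h_1,h_2\le\ell}\min\{h_1,h_2\}=\Theta(\ell^3)$ from Appendix~\ref{app:cnn-proof} then gives $\EE[(\Delta z^{(\ell)})^2]=\Theta(\eta^2\ell^3)$; the CNN-specific corrections $O(1/C_\ell)$, $O(s_\ell/N_\ell)$, $O(1/B)$ only multiply the prefactor and are lower-order under the hypotheses. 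Averaging over $\ell\le L$ and enforcing the AM-$\mu$P budget $\bar S=1$ finally yields $\eta^\star(L)=\Theta(L^{-3/2})$.

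The hard part will be carefully handling the dependencies accumulated along the $m_\ell$ intra-branch sublayers: after the first sublayer weight is integrated out, subsequent pre-activations $u^{(t)}$ and their derivatives $\partial_\mu z^{(t-1)}$ are no longer independent, so the clean diagonalization $\EE[W_{ik}W_{ik'}]=\Var[W]\,\delta_{kk'}$ must be applied sequentially via the tower rule, conditioning on the sigma-algebra generated by the earlier sublayers. One must verify that the resulting cumulative error stays $O(m_\ell/K)$ rather than inflating through products of gradient-norm fluctuations; the mean-field/large-width assumption $\EE[\sigma'(u)^2]=1/2$ combined with the $1/K$ variance scaling is what keeps each conditional increment $O(1/K)$ and makes the telescoping argument close. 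A secondary technicality is the zero-padding variant, where the $O(s_\ell/N_\ell)$ boundary correction from Theorem~\ref{thm:cnn-corrections} enters, but under the stated boundary-free or $N\gg k$ regime this does not disturb the leading $\Theta(L^{-3/2})$ conclusion.
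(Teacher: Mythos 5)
Your proposal follows essentially the same route as the paper's proof sketch: the kernel-coverage cancellation from Lemma~\ref{lem:expectation} makes each $1/K$-scaled convolutional sublayer contribute the same $O(1/K)$ W--W increment as an MLP sublayer, the cumulative block factor is bounded by $(1+c/(2K))^K\le e^{c/2}$, and the top-layer reduction plus overlap counting then give $\Theta(\eta^2\ell^3)$ and $\eta^\star(L)=\Theta(L^{-3/2})$, with boundary/width/batch effects relegated to lower order exactly as in the paper. The only (harmless) quantitative difference is that you compute the branch--branch term as the product $(c/(2K))^{m_\ell}$, whereas the paper bounds the branch-level increment additively by $m\cdot O(1/K)$; both are $o(1)$ per block, so the conclusion is unchanged.
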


This completes the proof of Theorem~\ref{thm:mup-resnet-scaling}.

\section{More Experiments}
\label{app:more-exps}

\subsection{CNN: Additional Experiments}
\label{app:cnn-extra}

\paragraph{GELU-specific settings.}
We use the same homogeneous 2D convolutional blocks (stride $1$, circular padding), optimizer (SGD without momentum, batch size $128$), and one-epoch protocol as in Sec.~\ref{sec:exp-setup}. The only differences are: (i) the activation is GELU; (ii) we adjust He fan-in initialization by multiplying the variance by $\sqrt{2}$ to align the activation fixed point with ReLU.\footnote{This alignment keeps pre-activation variance approximately depth-stationary, isolating the activation effect on the exponent; see \citep{Chen2024,Jelassi2023}.} Depth $L$ counts \texttt{conv + nonlinearity} blocks; the classifier is global pooling followed by a linear head. A complete panel for CIFAR-10 with GELU is shown in Fig.~\ref{fig:app-cnn-c10-gelu}.

\FloatBarrier
\begin{figure*}[htbp]
  \centering
  \includegraphics[width=0.96\textwidth]{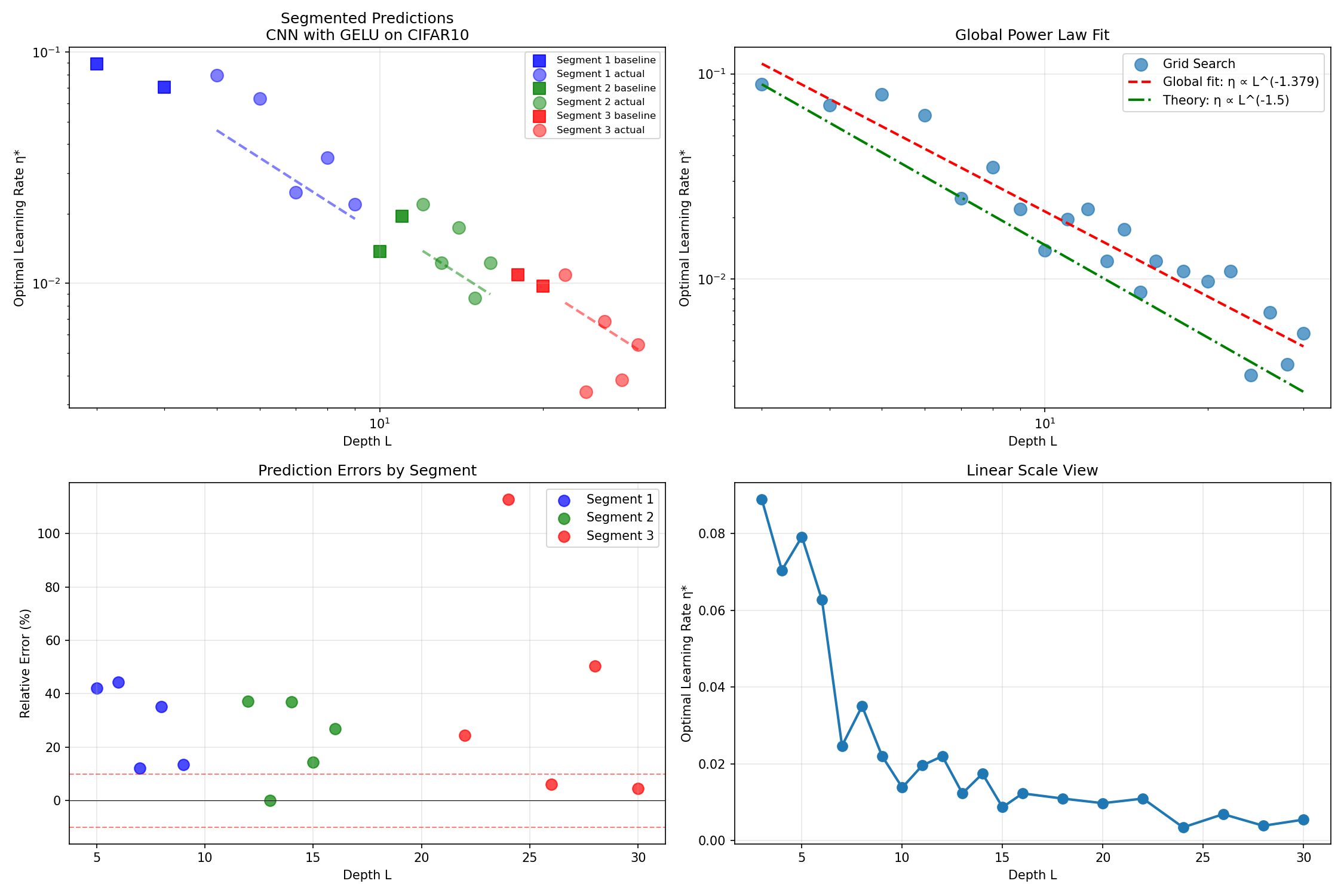}
  \vspace{-0.35em}
  \caption{\textbf{CNN on CIFAR-10 (GELU): full panel.}
  Top-left: segmented predictions using two anchor depths per segment (A/B/C).
  Top-right: global power-law fit of $\eta^\star$ vs.\ $L$ with slope $\hat\alpha \approx -1.38$ (red dashed),
  shown against a reference line (green dash-dotted).
  Bottom-left: relative errors by segment, with larger deviations near segment boundaries and at the largest depths.
  Bottom-right: linear-scale view showing the rapid decay of the maximal-update learning rate $\eta^\star$ with depth.}
  \label{fig:app-cnn-c10-gelu}
\end{figure*}
\FloatBarrier

Across this additional CNN setting (CIFAR-10 with GELU), the maximal-update learning rate follows a clear depth power law.
The global fit yields $\hat\alpha \approx -1.38$; segmented two-anchor fits extrapolate well within segments, while errors increase near segment boundaries and for the deepest models. See Fig.~\ref{fig:app-cnn-c10-gelu} for the full panel.

\paragraph{Padding ablation (circular vs.\ zero).}
\begin{wrapfigure}[15]{l}{0.45\textwidth} 
  \vspace{-0.6\baselineskip}
  \centering
  \includegraphics[width=\linewidth]{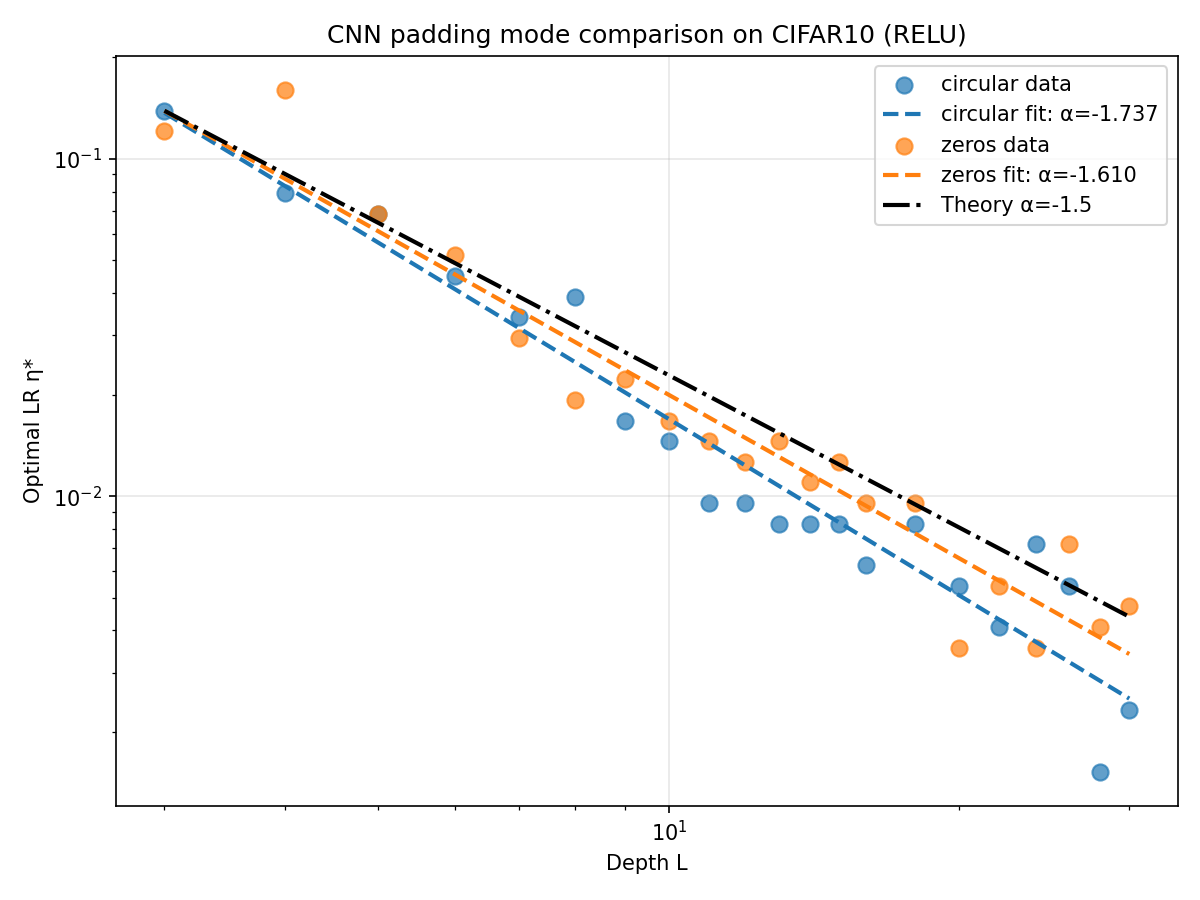}
  \caption{CNN padding comparison on CIFAR-10 (ReLU).}
  \label{fig:cnn-padding}
  \vspace{-0.6\baselineskip}
\end{wrapfigure}
We compare circular and zero padding under identical CNN settings on CIFAR-10 (ReLU).
Both padding modes follow essentially the same depth--learning-rate power law with
exponents close to the $L^{-3/2}$ prediction; differences are mainly a small vertical
shift on the log scale (i.e., a prefactor change) rather than a slope change. Hence,
padding has a minor effect on the scaling law, and zero padding is a practical default
in engineering.

\FloatBarrier

\begin{figure*}[htbp]
  \centering
  \includegraphics[width=0.96\textwidth]{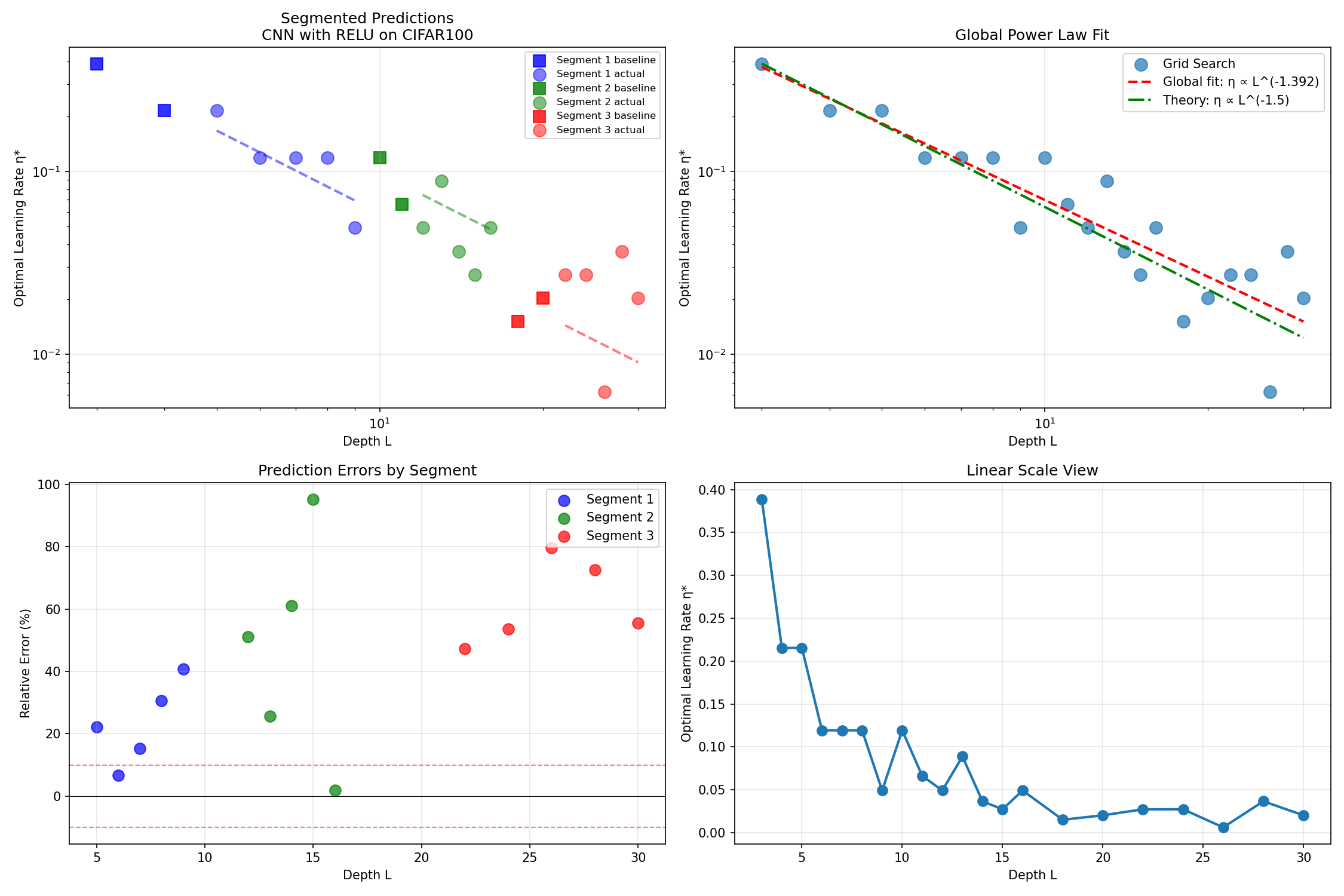}
  \vspace{-0.35em}
  \caption{\textbf{CNN on CIFAR-100 (ReLU): full panel.}
  Top-left: segmented predictions using two anchor depths per segment (A/B/C).
  Top-right: global power-law fit of $\eta^\star$ vs.\ $L$ with slope $\hat\alpha \approx -1.392$ (red dashed),
  closely tracking the $L^{-3/2}$ reference (green dash-dotted).
  Bottom-left: relative errors by segment, with larger deviations near segment boundaries and at the largest depths.
  Bottom-right: linear-scale view showing the rapid decay of the maximal-update learning rate $\eta^\star$ with depth.}
  \label{fig:app-cnn-c100-relu}
\end{figure*}

Beyond CIFAR-10, we also evaluate CNNs on CIFAR-100. Across this additional CNN setting (CIFAR-100 with ReLU), the maximal-update learning rate follows a clear depth power law. The global fit yields $\hat\alpha \approx -1.392$, consistent with the $-3/2$ prediction. Segmented two-anchor fits extrapolate well within segments, while errors increase near segment boundaries and for the deepest models. See Fig.~\ref{fig:app-cnn-c100-relu} for the full panel.

\subsection{ResNet: Additional Experiments (BatchNorm/Dropout)}
\label{app:resnet-extra}

We investigate whether standard regularizers used in practice—batch normalization (BN) and dropout—modify the depth–learning-rate law. Specifically, we replicate our ResNet study under four variants: (i) BN only, (ii) dropout only, (iii) BN+dropout, and (iv) none. The “none” variant matches our main ResNet setting; we report it only on CIFAR-100 for completeness. All experiments follow the protocol in Sec.~\ref{sec:exp-setup}: we identify the maximal-update learning rate $\eta^\star$ after one epoch on a logarithmic grid and evaluate segmented zero-shot depth transfer. The goal is to test whether these regularizers change the exponent $\alpha$ in $\eta^\star \propto L^{-\alpha}$ or primarily shift the prefactor $\kappa$ by altering gradient scale and noise statistics.

\begin{figure*}[htbp]
  \centering

  \begin{subfigure}[t]{0.96\textwidth}
    \centering
    \includegraphics[width=\linewidth]{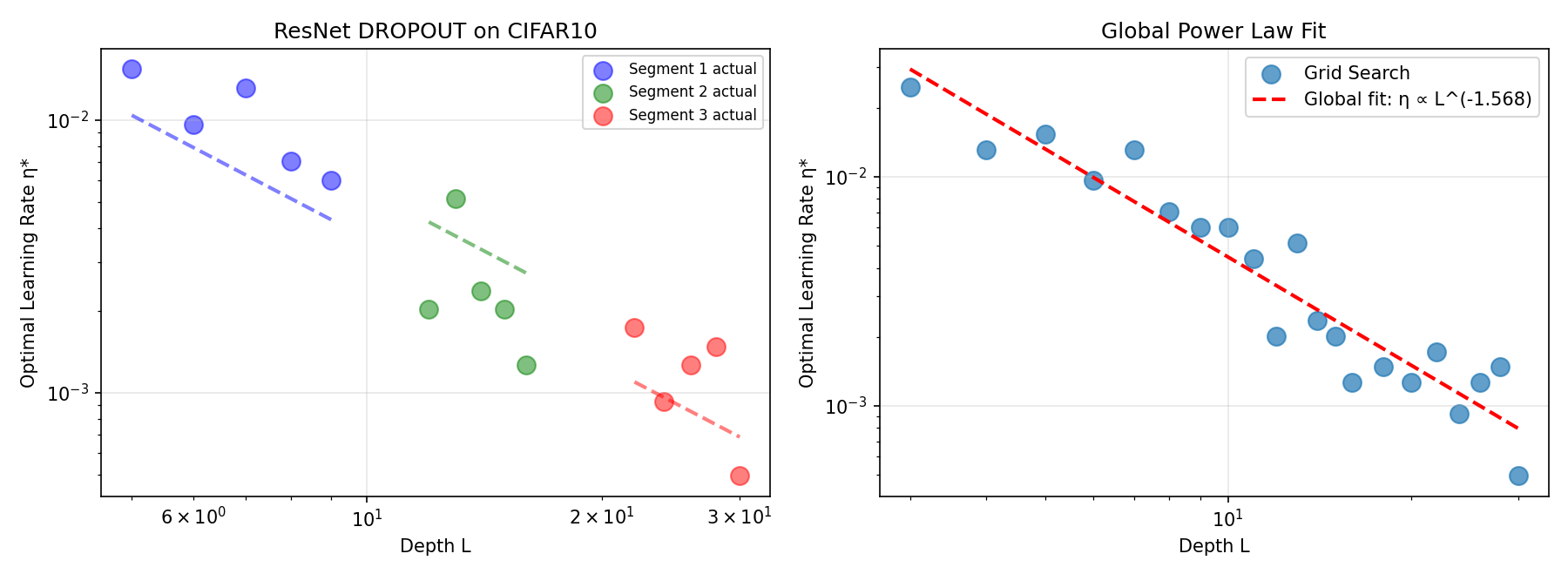}
    \caption{\textbf{Dropout} on CIFAR-10. Global slope $\hat\alpha \approx -1.568$.}
  \end{subfigure}

  \vspace{0.55em}

  \begin{subfigure}[t]{0.96\textwidth}
    \centering
    \includegraphics[width=\linewidth]{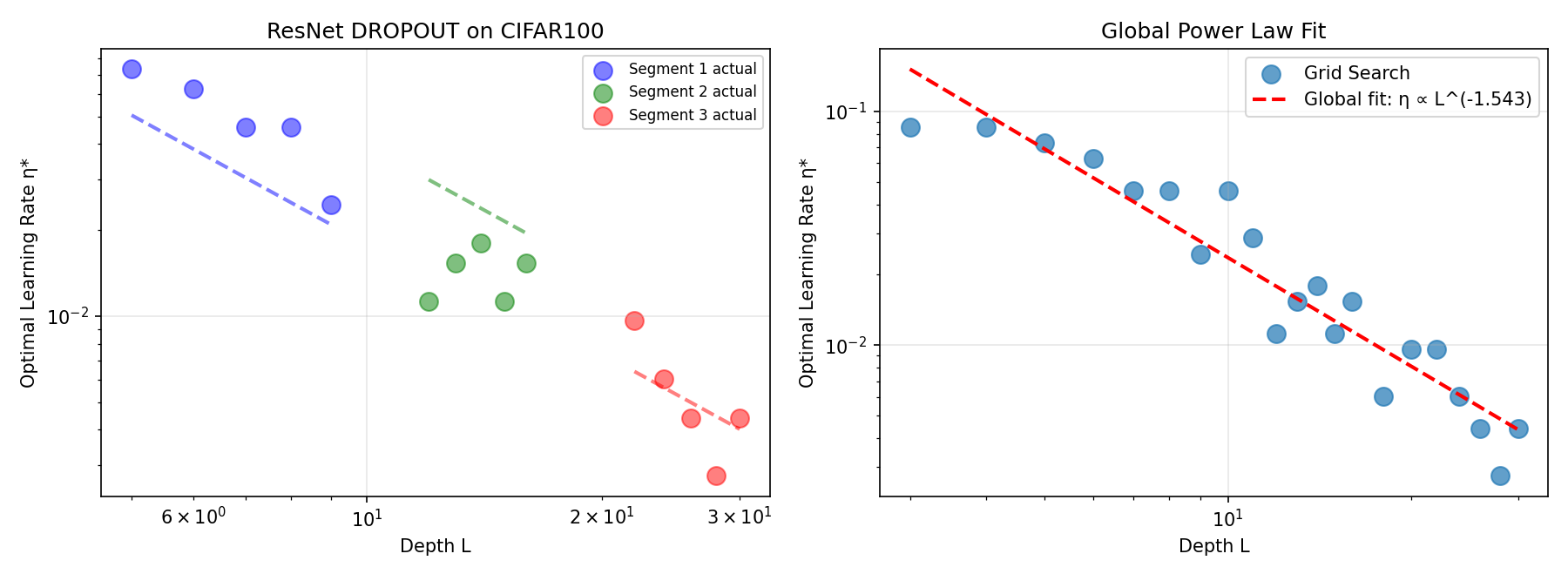}
    \caption{\textbf{Dropout} on CIFAR-100. Global slope $\hat\alpha \approx -1.543$.}
  \end{subfigure}

  \vspace{0.55em}

  \begin{subfigure}[t]{0.96\textwidth}
    \centering
    \includegraphics[width=\linewidth]{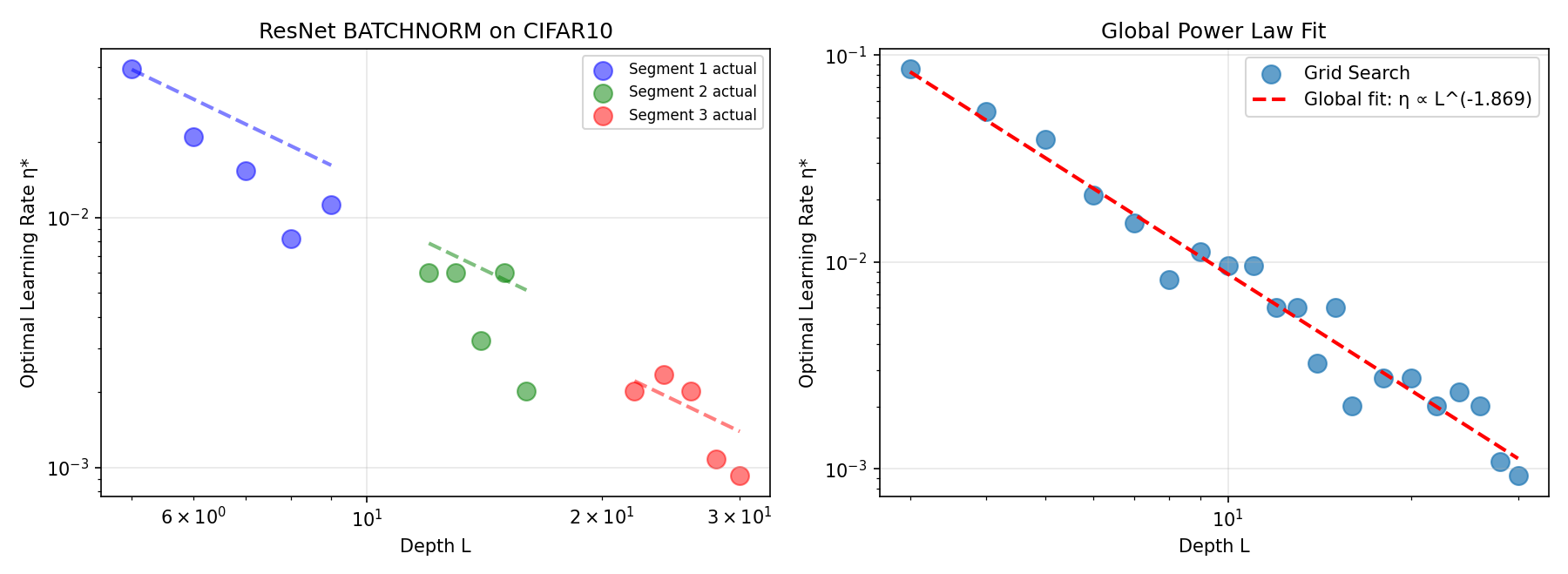}
    \caption{\textbf{BatchNorm} on CIFAR-10. Global slope $\hat\alpha \approx -1.869$.}
  \end{subfigure}

  \vspace{0.55em}

  \begin{subfigure}[t]{0.96\textwidth}
    \centering
    \includegraphics[width=\linewidth]{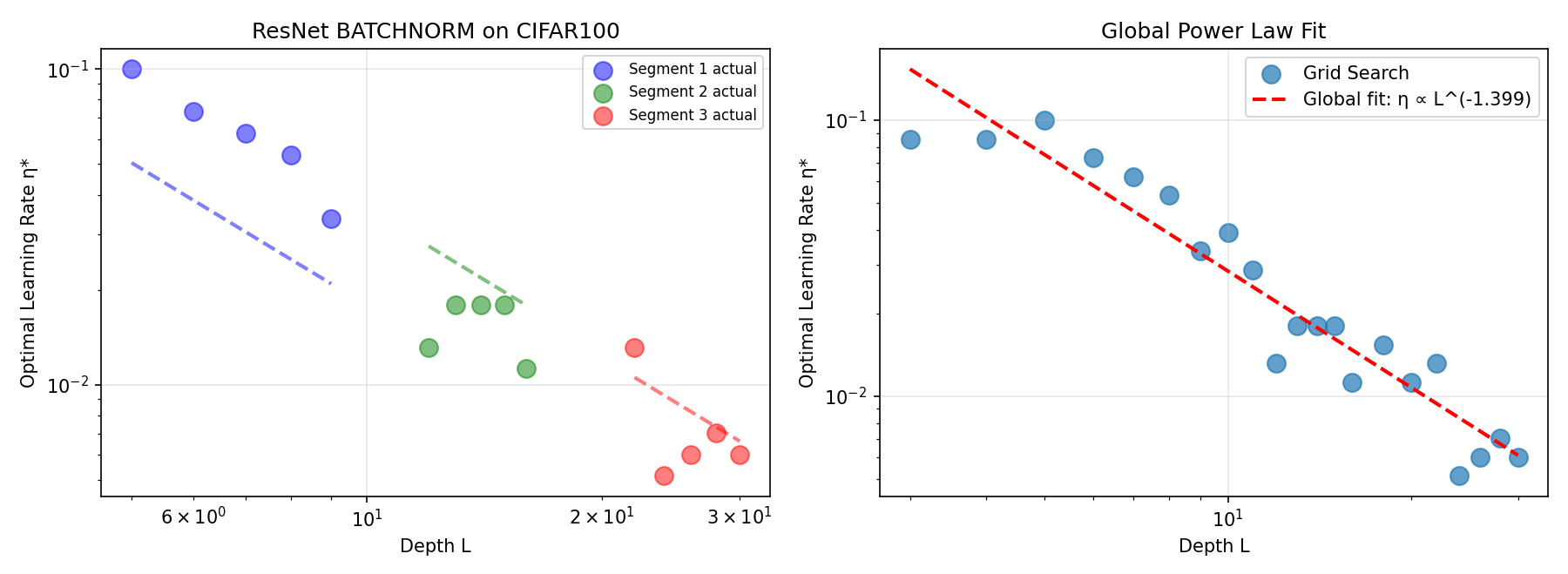}
    \caption{\textbf{BatchNorm} on CIFAR-100. Global slope $\hat\alpha \approx -1.399$.}
  \end{subfigure}

  \caption{\textbf{ResNet variants: Dropout and BatchNorm.}
  Each row shows a full panel (segmented predictions + global power-law fit) for the specified variant and dataset.}
  \label{fig:app-resnet-dropout-bn}
\end{figure*}

\begin{figure*}[htbp]
  \centering

  \begin{subfigure}[t]{0.96\textwidth}
    \centering
    \includegraphics[width=\linewidth]{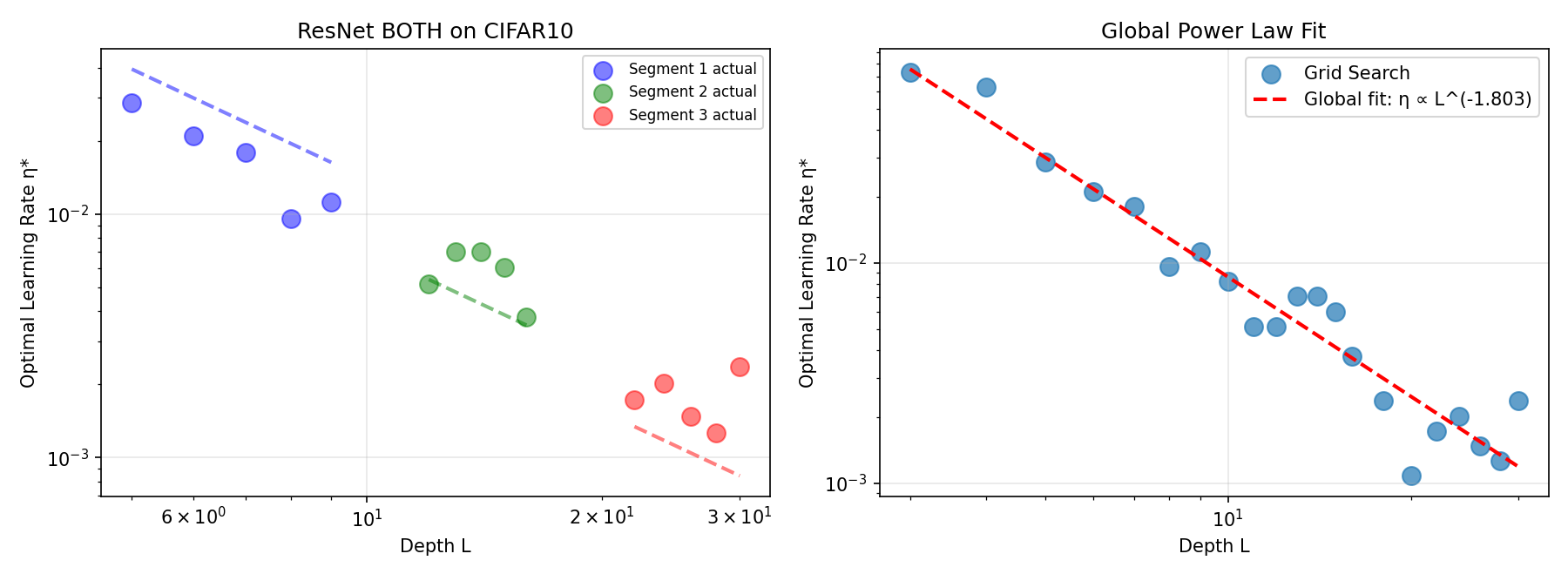}
    \caption{\textbf{Both (BN+Dropout)} on CIFAR-10. Global slope $\hat\alpha \approx -1.803$.}
  \end{subfigure}

  \vspace{0.55em}

  \begin{subfigure}[t]{0.96\textwidth}
    \centering
    \includegraphics[width=\linewidth]{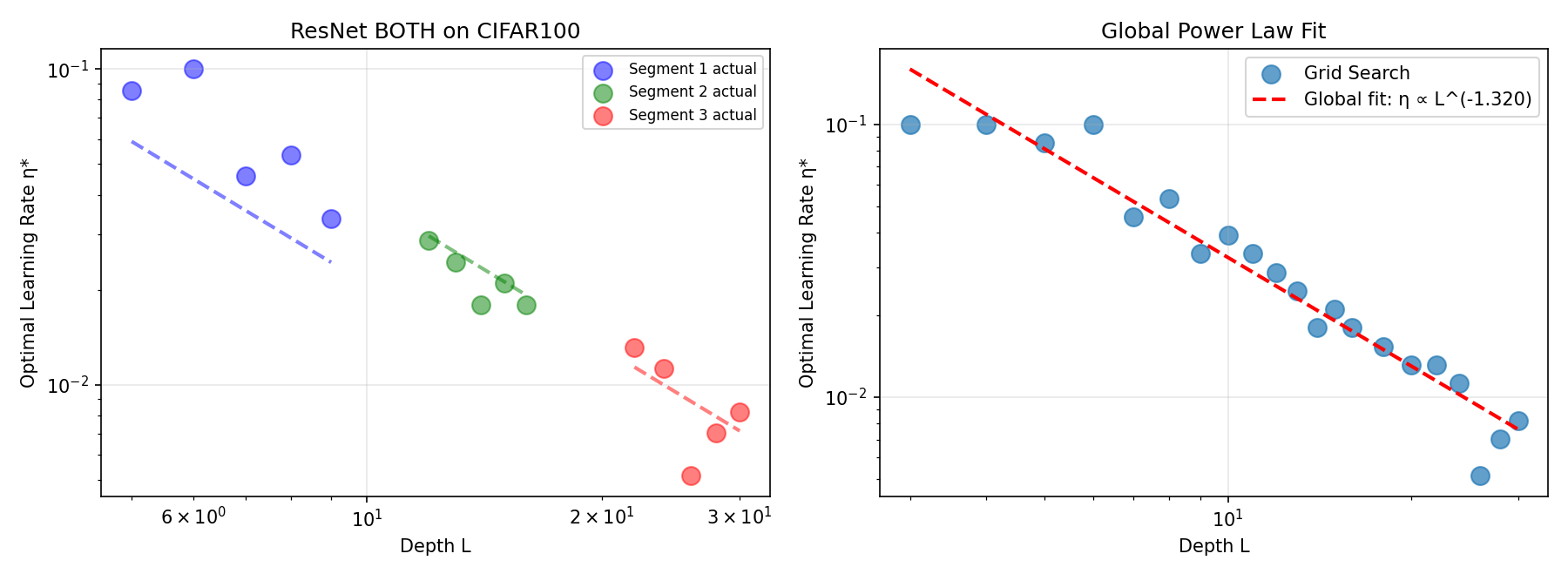}
    \caption{\textbf{Both (BN+Dropout)} on CIFAR-100. Global slope $\hat\alpha \approx -1.320$.}
  \end{subfigure}

  \vspace{0.55em}

  \begin{subfigure}[t]{0.96\textwidth}
    \centering
    \includegraphics[width=\linewidth]{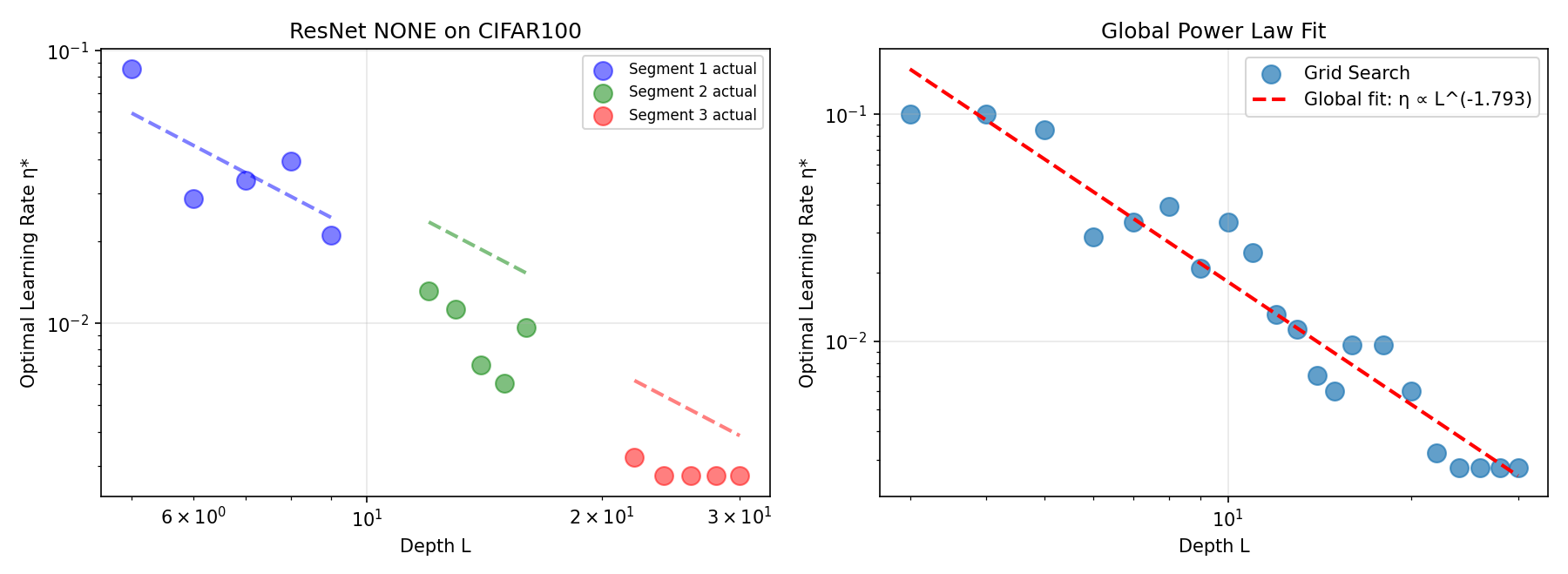}
    \caption{\textbf{None} (no BN/Dropout) on CIFAR-100. Global slope $\hat\alpha \approx -1.793$.}
  \end{subfigure}

  \caption{\textbf{ResNet variants: None and Both.}
  Panels (top to bottom): Both (BN+Dropout) on CIFAR-10; Both (BN+Dropout) on CIFAR-100; None (no BN/Dropout) on CIFAR-100.}
  \label{fig:app-resnet-none-both}
\end{figure*}

Across ResNet variants, log--log fits yield a stable power law $\eta^\star \propto L^{-\alpha}$ with $|\alpha|\approx 1.5\text{–}1.6$; equivalently, $\log(1/\eta^\star)$ increases approximately linearly with $\log L$. Under Dropout, both CIFAR-10 and CIFAR-100 give $|\alpha|\approx 1.56$; with BatchNorm, the global slopes are $|\alpha|\approx 1.869$ and $1.399$ (mean $1.634$); with BatchNorm+Dropout, $|\alpha|\approx 1.56$. These differences are small and consistent with expected estimation noise (finite-width, padding/boundary effects, and the one-epoch proxy), indicating that the depth–learning-rate rule is robust to these regularizers.

\subsection{ImageNet: Additional Scaling Results}
\label{app:imagenet-extra}

We repeat the maximal-update LR search on ImageNet with the same logarithmic grid as in Sec.~\ref{sec:exp-setup}.
For each depth $L$, we train for \emph{one full epoch} (a complete pass over the ImageNet training set) and record $\eta^\star$ at the end of the epoch.
Other settings mirror Sec.~\ref{sec:exp-setup} (SGD without momentum, He fan-in); the batch size follows the standard ImageNet recipe and is held constant across depths.

\begin{figure*}[htbp]
  \centering
  \includegraphics[width=0.96\textwidth]{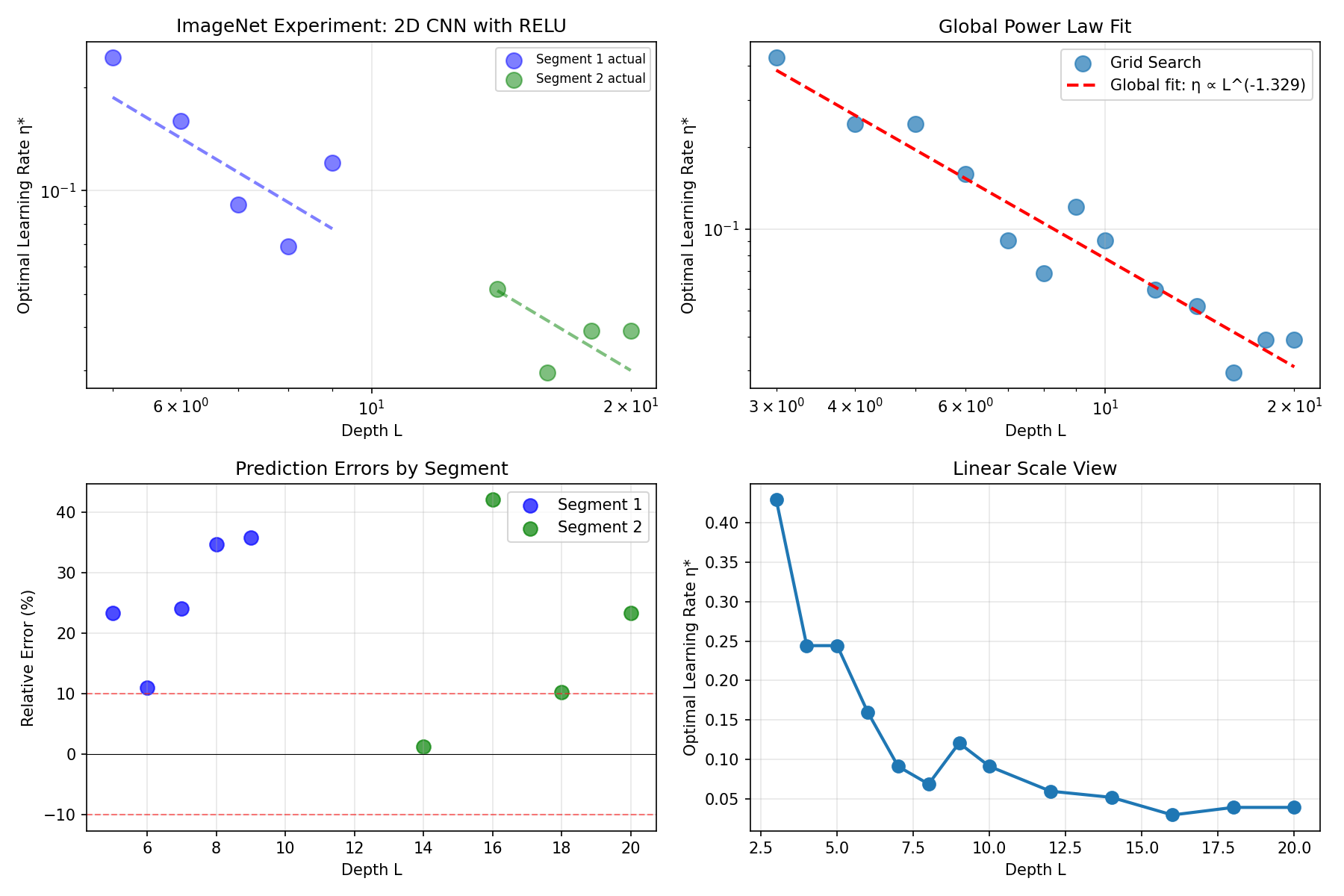}
  \vspace{-0.35em}
  \caption{\textbf{ImageNet: 2D CNN (ReLU), full panel.}
  Top-left: segmented two-anchor predictions (two segments).
  Top-right: global log--log fit of $\eta^\star$ vs.\ $L$ with slope $\hat\alpha \approx -1.329$ (red dashed).
  Bottom-left: segment-wise relative errors.
  Bottom-right: linear-scale view of $\eta^\star$ vs.\ depth.}
  \label{fig:app-imagenet-cnn}
\end{figure*}

\begin{figure*}[htbp]
  \centering
  \includegraphics[width=0.96\textwidth]{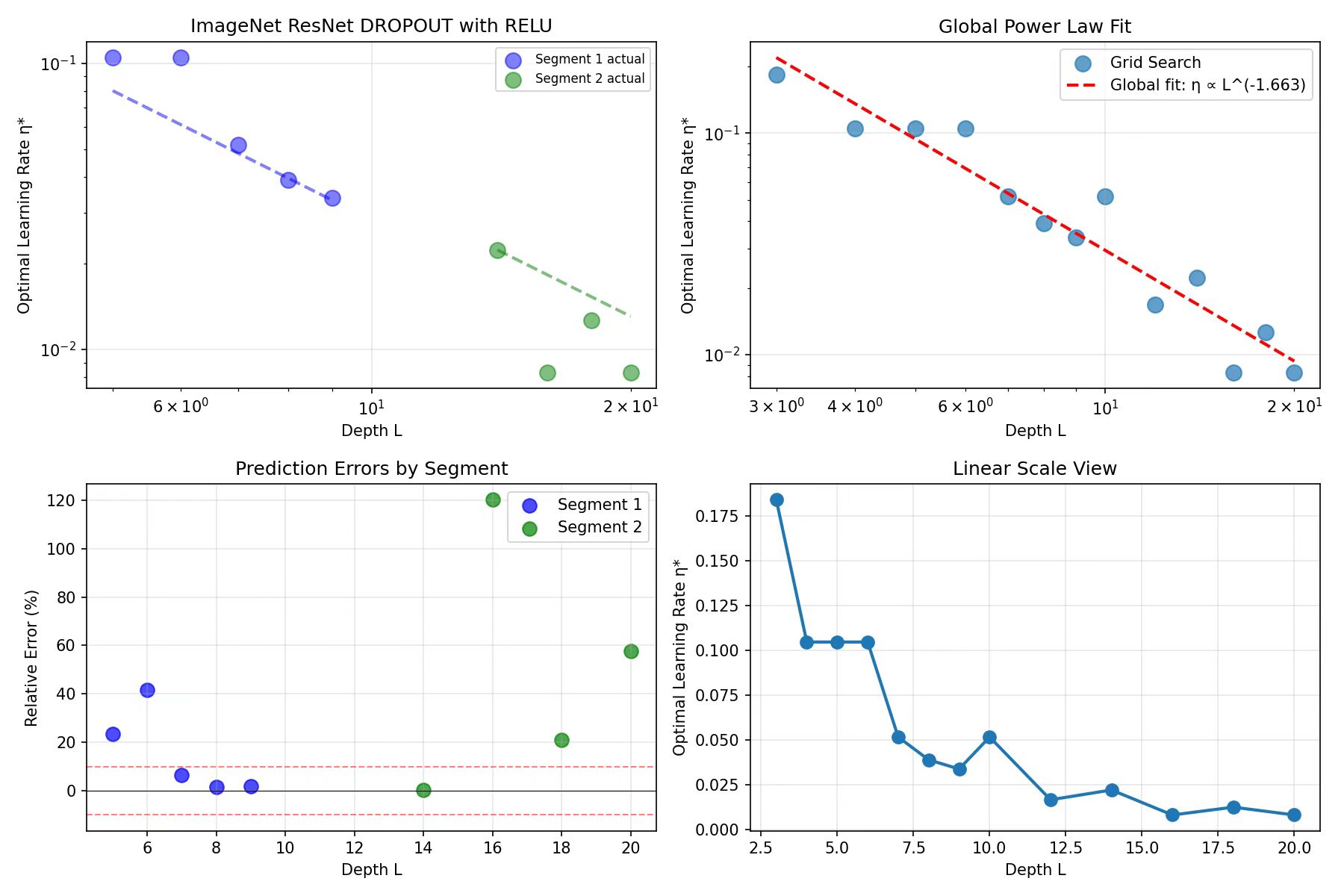}
  \vspace{-0.35em}
  \caption{\textbf{ImageNet: ResNet (ReLU) with Dropout, full panel.}
  Top-left: segmented two-anchor predictions (two segments).
  Top-right: global log--log fit of $\eta^\star$ vs.\ $L$ with slope $\hat\alpha \approx -1.663$ (red dashed).
  Bottom-left: segment-wise relative errors.
  Bottom-right: linear-scale view of $\eta^\star$ vs.\ depth.}
  \label{fig:app-imagenet-resnet}
\end{figure*}

\begin{figure*}[htbp]
  \centering
  \includegraphics[width=0.96\textwidth]{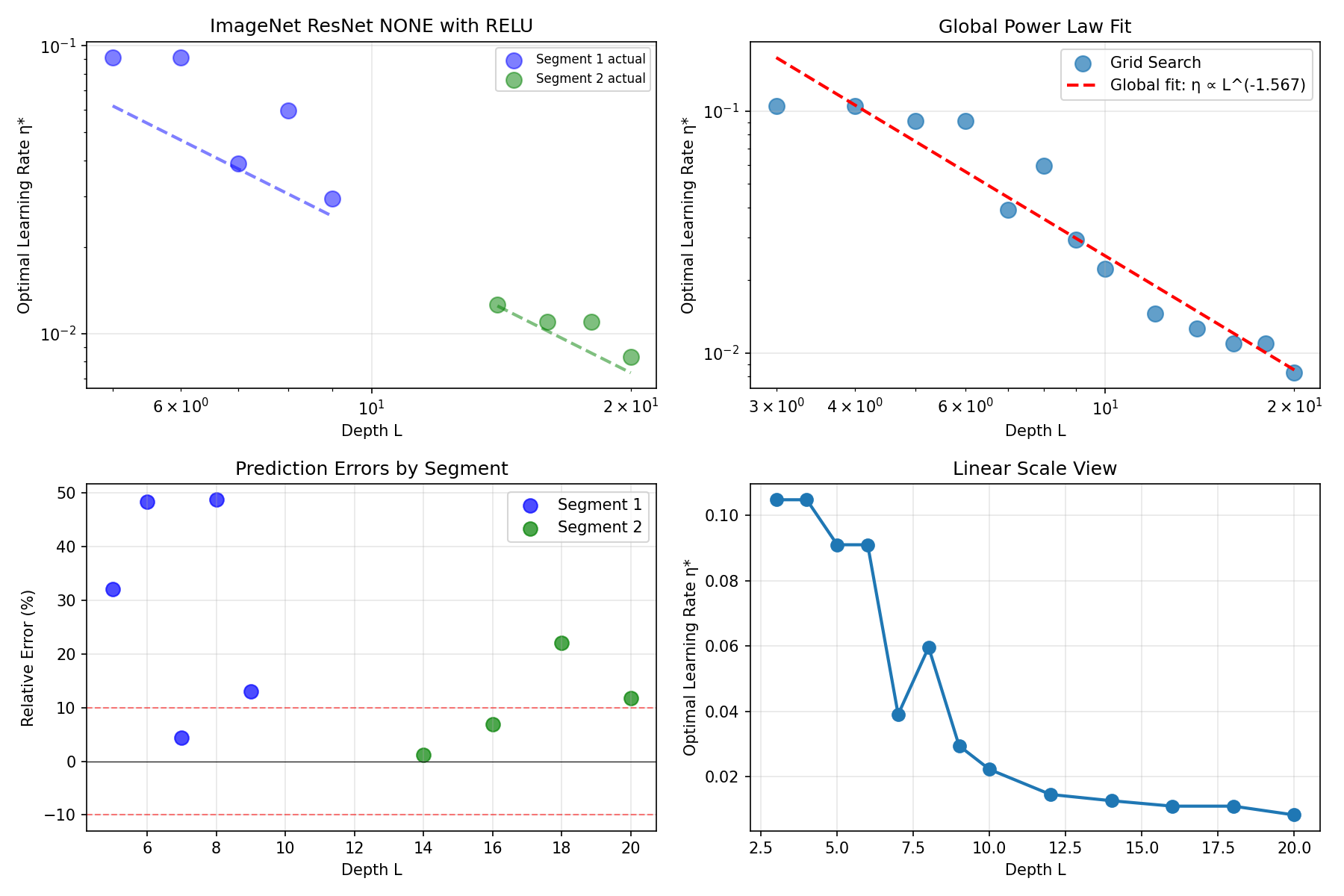} 
  \vspace{-0.35em}
  \caption{\textbf{ImageNet: ResNet (ReLU) \emph{without} BN/Dropout, full panel.}
  Top-left: segmented two-anchor predictions.
  Top-right: global log--log fit of $\eta^\star$ vs.\ $L$ (red dashed).
  Bottom-left: segment-wise relative errors.
  Bottom-right: linear-scale view of $\eta^\star$ vs.\ depth.}
  \label{fig:app-imagenet-resnet-none}
\end{figure*}

Across ImageNet-scale runs, $\eta^\star$ decays predictably with depth:
the CNN yields $\hat\alpha \approx -1.329$ (Fig.~\ref{fig:app-imagenet-cnn}),
the ResNet with dropout yields $\hat\alpha \approx -1.663$ (Fig.~\ref{fig:app-imagenet-resnet}),
and the ResNet without dropout yields $\hat\alpha \approx -1.567$ (Fig.~\ref{fig:app-imagenet-resnet-none}).
These values are consistent with the $L^{-3/2}$ rule and align with the CIFAR results.

\section{Why GELU Exhibits a Slightly Steeper Depth--LR Exponent than ReLU}
\label{sec:why-gelu-steeper}

Empirically, the fitted depth--learning-rate exponent for GELU is marginally more negative than for ReLU (e.g., $-1.40$ vs.\ $-1.35$). This small gap can be attributed to two effects:

\paragraph{Activation--derivative statistics.}
With fan-in initialization adjusted to keep $z\sim\mathcal N(0,1)$ (as in Sec.~\ref{app:cnn-extra}), ReLU satisfies 
\[
\mathbb{E}[\sigma'(z)^2] = 0.5,
\]
whereas GELU $\phi(x)=x\Phi(x)$ yields 
\[
\mathbb{E}[\phi'(z)^2] \approx 0.456.
\]
The resulting expected Jacobian factor per layer is therefore slightly smaller for GELU 
($\chi = 2\,\mathbb{E}[\phi'(z)^2] \approx 0.912$ vs.\ $1$ for ReLU), which lowers the effective constant in the depth--LR scaling and, over a finite depth range, manifests as a slightly more negative fitted exponent in log--log regression.

\paragraph{Finite-depth/width corrections.}
Small variance drifts across layers alter $\mathbb{E}[\phi'(z)^2]$ along depth; GELU is more sensitive to such drifts because $\phi'$ depends smoothly on $z$. This induces a mild, depth-dependent attenuation of the effective step size in deeper layers, which---when regressed as a single power law---manifests as a slightly more negative fitted exponent.

\section{On the Loss: Cross-Entropy vs.\ MSE in the Derivation}
\label{app:loss-ce-vs-mse}

Our theoretical derivation uses MSE for analytic convenience in the one-step maximal-update analysis, whereas all experiments use multi-class cross-entropy (CE). This mismatch does not affect the depth exponent.

At initialization, logits are near zero and $\mathrm{softmax}(z)$ is close to uniform. For one-hot targets $y$ with $C$ classes, the CE logit gradient is
\[
g \;=\; p - y,\qquad p=\mathrm{softmax}(z),
\]
so that
\[
\|g\|_2^2 \;=\; \bigl(1-\tfrac1C\bigr)^2 + (C-1)\!\left(\tfrac1C\right)^2 \;=\; 1-\tfrac1C \;=\; O(1).
\]
Hence CE provides $O(1)$-scale per-sample gradients in early training, the regime in which we identify $\eta^\star$. Under our He/$\mu$P parameterization, the depth dependence of $\eta^\star(L)$ is governed by architecture (Jacobian products), so swapping MSE for CE only rescales the overall prefactor $\kappa$ and does \emph{not} change the power-law exponent. Empirically, CE and MSE produce nearly parallel $\log\eta^\star$--$\log L$ fits with the same slope, differing by a vertical shift (prefactor).

\end{document}